\documentclass{article}

\usepackage{arxiv}
\usepackage[utf8]{inputenc}
\usepackage[T1]{fontenc}
\usepackage{hyperref}
\usepackage{url}
\usepackage{booktabs}
\usepackage{amsfonts}
\usepackage{amsmath}
\usepackage{amssymb}
\usepackage{nicefrac}
\usepackage{microtype}
\usepackage{lipsum}
\usepackage{graphicx}
\usepackage{natbib}
\usepackage{caption}
\usepackage{subcaption}
\usepackage{float}
\usepackage{xcolor}
\usepackage{pifont}
\newcommand{\cmark}{\ding{51}}  
\newcommand{\xmark}{\ding{55}}  
\usepackage{colortbl}
\usepackage{tikz}
\usetikzlibrary{positioning,fit,backgrounds,arrows.meta,calc}
\usepackage{array}
\usepackage{mathtools}
\usepackage[framemethod=tikz]{mdframed}
\usepackage{enumitem}
\usepackage{amsthm}
\usepackage{placeins}
\usepackage{multirow}
\newcommand{\rw}[3]{r(#1, #2)_{#3}}  
\usepackage[most]{tcolorbox}
\tcbuselibrary{skins, breakable}

\usepackage{pgfplots}
\pgfplotsset{compat=1.18}
\usepackage{times}
\usepackage{latexsym}
\usepackage{algorithm}
\usepackage{algpseudocode}
\usepackage{inconsolata}
\usepackage{listings}
\usepackage{nameref}
\usepackage{xurl}
\definecolor{boxheadgray}{gray}{0.88}
\definecolor{arrowgray}{RGB}{100,100,100}
\definecolor{promptbg}{RGB}{245,248,252}
\definecolor{outputbg}{RGB}{245,252,246}
\definecolor{srcolor}{RGB}{80,100,160}
\definecolor{supportedgreen}{RGB}{30,130,60}

\definecolor{learnablebadge}{RGB}{41,98,171}
\definecolor{fixedbadge}{RGB}{178,58,58}

\definecolor{C1bg}{RGB}{232,244,253}
\definecolor{C1fr}{RGB}{25,100,160}
\definecolor{C2bg}{RGB}{243,232,255}
\definecolor{C2fr}{RGB}{100,40,170}
\definecolor{C3bg}{RGB}{232,245,233}
\definecolor{C3fr}{RGB}{30,120,50}
\definecolor{C4bg}{RGB}{255,243,224}
\definecolor{C4fr}{RGB}{180,80,10}
\definecolor{passgreen}{RGB}{27,94,32}

\theoremstyle{plain}
\newtheorem{theorem}{Theorem}
\newtheorem{lemma}{Lemma}
\newtheorem{corollary}{Corollary}

\theoremstyle{definition}
\newtheorem{definition}{Definition}
\newtheorem{assumption}{Assumption}
\newtheorem{remark}{Remark}
\newtheorem{proposition}[theorem]{Proposition}

\newtcolorbox{promptbox}[2]{
  colback=promptbg, colframe=boxheadgray,
  title=#1, fonttitle=\small\sffamily\bfseries, coltitle=white,
  boxrule=0.8pt, arc=3pt, breakable, enhanced,
  left=6pt, right=6pt, top=4pt, bottom=4pt,
  attach boxed title to top left={xshift=6pt, yshift=-2mm},
  boxed title style={colback=#2, arc=2pt, boxrule=0pt},
}
\newcommand{\learnabletag}{\textcolor{white}{\colorbox{learnablebadge}{\scriptsize\textsf{LEARNABLE}}}}
\newcommand{\fixedtag}{\textcolor{white}{\colorbox{fixedbadge}{\scriptsize\textsf{FIXED}}}}

\tcbset{
  pipe/.style={
    enhanced, arc=3pt, boxrule=0.55pt,
    left=6pt, right=6pt, top=5pt, bottom=4pt,
    width=0.82\linewidth,      
    halign=left,               
    fontupper=\scriptsize,
    attach boxed title to top left={yshift=-2.5mm, xshift=6pt},
    boxed title style={
      arc=2pt, boxrule=0.4pt,
      left=3pt, right=3pt, top=1pt, bottom=1pt
    }
  }
}
\graphicspath{ {./images/} }

\title{Prompt codebook: Discrete Compositional Optimization for Language Model Instruction Refinement}

\author{}   

\begin{document}
\date{}        
\maketitle
\vspace{0.2cm}

\begin{center}
{\bfseries\large
Jyotirmoy Nath\textsuperscript{1}\quad
Neeraj Kumar\textsuperscript{1}\quad
Brejesh Lall \textsuperscript{1}
}

\vspace{3pt}
{\small\color{blue!55!black}
\textsuperscript{1}Indian Institute of Technology Delhi, India
}

\vspace{5pt}
{\ttfamily\small
jyotirmoy.nath@ee.iitd.ac.in, neerajkr2k14@gmail.com, brejesh@ee.iitd.ac.in
}
\end{center}

\vspace{1em}
\begin{abstract}
Automatic prompt optimization (APO) has driven significant gains in LLM-based agentic workflows~\citep{yao2023react, shinn2023reflexion, khattab2024dspy, pryzant2023protegi, yuksekgonul2025textgrad}. However, most existing methods treat each task's prompt as a monolithic, instance-blind string optimized through global edits, producing brittle updates and preventing the reuse of learned sub-behaviors. We propose \textbf{Prompt Codebook Optimization (PCO)}, a novel compositional prompt optimization framework that recasts APO as discrete learning over a finite vocabulary of natural-language \emph{instincts}---atomic, reusable instruction units. PCO organizes prompt-construction knowledge in a discrete codebook and routes each input to a small subset of entries via an LLM-based encoder; a generator composes them into a prompt for the executor; a critic emits a structured verdict that decomposes by attribution into per-variable textual gradients, jointly training the encoder, generator, critic, and codebook under a language-valued min-max objective. The resulting routing is \emph{per-instance}: different inputs in the same task receive different instinct compositions. Across six benchmarks, PCO improves aggregate performance over zero-shot by $+13.50$ points on Qwen3-8B and $+11.80$ points on LLaMA-3.1-8B. Crucially, PCO surpasses GEPA on \textsc{HotpotQA} by +6.34 points (Qwen3-8B) and +4.27 points (LLaMA-3.1-8B), while simultaneously reducing deployed prompt length by up to $14.1\times$ vs.\ MIPROv2 and $3.0\times$ vs.\ GEPA.
\end{abstract} 

\FloatBarrier  
\section{Introduction}
Large language models (LLMs) increasingly operate inside \textbf{compositional agentic workflows}, pipelines where one agent plans, another invokes tools, a third verifies, and feedback loops refine all three~\citep{yao2023react,shinn2023reflexion,wu2023autogen,khattab2024dspy}. These systems are non-differentiable by construction: tool calls, retrieval, and discrete planning steps preclude analytic gradients. Within this landscape, \textbf{automatic prompt optimization (APO)} is a core sub-problem: every agent is a composition of LLM calls conditioned on prompts that determine the planner's decisions, the verifier's calibration, and the safety of the system as a whole.

\begin{figure}[t]
\centering
\resizebox{\columnwidth}{!}{%
\begin{tikzpicture}[
    >=Stealth,
    font=\sffamily\small,
    base_box/.style={draw=black, thick, rounded corners=3pt, align=center,
                      minimum height=0.85cm, inner sep=3pt, fill=gray!5},
    blue_box/.style={base_box, fill=blue!7},
    red_box/.style={base_box, fill=red!6, draw=red!55!black},
    cb_box/.style={draw=black!70, thick, rounded corners=2pt, align=center,
                    minimum height=0.85cm, inner sep=3pt, fill=blue!4},
    tiny_sq/.style={draw=black, thin, rounded corners=1pt, minimum size=0.2cm, inner sep=0pt},
    token/.style={circle, draw=black, thick, minimum size=0.26cm, inner sep=0pt},
    divider/.style={draw=black!40, dashed, thick},
]

\begin{scope}[xshift=-0.3cm] 
\node[anchor=west, font=\bfseries] (labelA) at (-1.0, 2.4) {(a) Monolithic APO};
\node[anchor=west, font=\bfseries] (labelB) at (7.5, 2.4) {(b) Prompt codebook (Ours)};

\node[base_box, minimum width=1.0cm] (x_mono) at (-2, 0) {Input\\$x$};
\node[red_box, minimum width=2.1cm] (theta) at (0, 0) {Monolithic\\Prompt};
\node[base_box, minimum width=1.2cm] (llm_mono) at (2.5, 0) {Executor};
\node[text=red!70!black, font=\bfseries, align=center] (y_mono) at (4.9, 0) {Answer\\$y$};

\draw[->, thick] (x_mono) -- (theta);
\draw[->, thick] (theta) -- (llm_mono);
\draw[->, thick] (llm_mono) -- (y_mono);

\node[token, fill=orange!70] (a_t1) at (-2, -0.8) {};
\node[token, fill=teal!70]   (a_t2) at (-2, -1.25) {};
\node[font=\tiny, anchor=east] at (-2.1,-1.175) {$x_1,x_2$};

\draw[->, thick, orange!80!black]
    (a_t1.east) .. controls (0.6,-1.0) and (1.1,-0.4) .. (theta.south);
\draw[->, thick, teal!80!black]
    (a_t2.east) .. controls (0.6,-1.45) and (1.3,-0.5) .. (theta.south);

\node[font=\tiny, align=center, text width=2.6cm] at (0.0,-1.85)
      {\textit{same task, different input$\rightarrow$ same prompt}};
\end{scope}
\draw[divider] (5.15, 2.3) -- (5.15, -1.6);

\node[base_box, minimum width=1.0cm] (x_pco) at (6.1, 0) {Input\\$x$};
\node[blue_box, minimum width=1.35cm] (enc) at (8.0, 0) {Encoder\\$\theta$};
\node[blue_box, minimum width=1.35cm] (gen) at (10.3, 0) {Generator\\$\phi$};
\node[base_box, minimum width=1.2cm] (llm_pco) at (12.2, 0) {Executor};
\node[text=green!55!black, font=\bfseries, align=center] (y_pco) at (14.5, 0) {Answer\\$y$};

\node[cb_box, minimum width=2.4cm] (cb) at (9.15, 1.4) {\scriptsize\textbf{Codebook $C$}};
\foreach \i/\c in {1/cyan!40, 2/green!40, 3/yellow!40, 4/purple!40, 5/red!40} {
  \pgfmathsetmacro{\xoff}{(\i-3)*0.32}
  \node[tiny_sq, fill=\c] at ([xshift=\xoff cm, yshift=-0.24cm]cb.center) {};
}

 \node[
    cb_box,
    minimum width=2.4cm,
    minimum height=1cm,
    align=center
] (cb) at (9.15,1.45) {};

\node[font=\scriptsize\bfseries] at ($(cb.center)+(0,0.36)$)
    {Codebook $C$};

\node[font=\tiny] at ($(cb.center)+(0,0.08)$)
    {Textual Instincts};

\foreach \x/\c in {
    -1/cyan!45,
    -0.7/green!45,
     -0.4/yellow!60,
     -0.1/purple!45,
     0.2/red!45}
{
    \node[
        tiny_sq,
        fill=\c,
        minimum size=4pt
    ] at ($(cb.center)+(\x,-0.34)$) {};
}

\node[font=\scriptsize] at ($(cb.center)+(0.5,-0.34)$) {$\cdots$}; 
\draw[->, thick] (x_pco) -- (enc);
\draw[->, thick] (enc) -- (gen);
\draw[->, thick] (gen) -- (llm_pco);
\draw[->, thick, rounded corners=3pt]
    (x_pco.south) -- ++(0,-0.20) -| (gen.south);
\draw[->, thick] (llm_pco) -- (y_pco);

\draw[->, thick] (enc.north) -- ++(0, 0.35) -| ($(cb.south) + (-0.55,0)$);
\draw[->, thick] ($(cb.south) + (0.55,0)$) -- ++(0,-0.35) -| (gen.north);

\node[token, fill=orange!70] (b_t1) at (6.1, -1.1) {};
\node[token, fill=teal!70]   (b_t2) at (6.1, -1.45) {};
\node[font=\tiny, anchor=east] at (6.1,-1.275) {$x_1,x_2$};

\draw[->, thick, orange!80!black]
    (b_t1.east) .. controls (7.2,-1.05) and (7.7,-0.4) .. (enc.south);
\draw[->, thick, teal!80!black]
    (b_t2.east) .. controls (7.3,-1.4) and (7.85,-0.5) .. (enc.south);

\node[font=\tiny, align=center, text width=2.7cm] at (6.65,-1.85)
      {\textit{same task, different inputs $\rightarrow$ different instincts}};

\end{tikzpicture}
}
\caption{\textbf{Architectural Comparison.} \textbf{(a)} Monolithic methods apply a single, instance-blind prompt per task. \textbf{(b)} PCO routes each input through an encoder that selects instincts from a discrete codebook, composed into an instance-specific prompt; so distinct inputs within the same task receive different instinct compositions.}
\label{fig:flat_comparison}
\end{figure} 
Most prior APO methods, including black-box search~\citep{zhou2022ape,shin2020autoprompt},
RL over tokens~\citep{deng2022rlprompt},
evolutionary mutation~\citep{guo2024evoprompt,agrawal2025gepa},
reflective refinement~\citep{madaan2023selfrefine},
and textual-gradient optimization of prompt strings~\citep{pryzant2023protegi,yuksekgonul2025textgrad}, treat the prompt for each task as a \textbf{monolithic, instance-blind text object}: a single string applied identically
to every input and optimized through global edits. Consequently, the formulation is both
\textbf{instance-blind}, because one prompt serves all inputs, and \textbf{globally entangled}, because a critique
intended to fix one failure may rewrite the entire prompt and disrupt other well-tuned behaviors.

Meanwhile, a parallel decade of generative modeling has shown that \textbf{discrete latent codebook}~\citep{vandenoord2017vqvae,razavi2019vqvae2,chang2022maskgit}, paired with a \textbf{generator--critic} training signal~\citep{goodfellow2014gan,arjovsky2017wgan,esser2021vqgan}, yield state-of-the-art results in image synthesis, neural audio coding~\citep{defossez2022encodec}, and token-based multimodal modeling. A defining property of such systems is that they are trained \textbf{per distribution}: their power is intra-domain, with codes specialized to a single data regime. APO has exactly this shape; each task defines its own input distribution, reward, and optimal prompting policy, and is in practice run per-task---yet no prior APO method organizes prompts as compositions over a discrete codebook of natural-language instincts.

We close this gap with \textbf{Prompt Codebook Optimization (PCO)}, a framework that introduces a discrete codebook of natural-language \emph{instincts} --- atomic instincts that are themselves optimizable variables --- as a first-class object in prompt optimization. A \textbf{prompt encoder} (an LLM) routes each input $x$ to a small subset of instincts via \emph{semantic routing} allowing the encoder to select instincts based on meaning rather than vector proximity. A \textbf{prompt generator} composes the selected instincts, conditioned on $x$, into a fluent prompt dispatched to the executor. A \textbf{critic} emits a structured natural-language verdict that an attribution operator partitions into per-variable textual gradients~\citep{yuksekgonul2025textgrad}, propagating \textbf{component-wise} through the generator, the active instincts, the encoder's routing policy, and the critic's own criteria. The full system is trained end-to-end under a language-valued min--max objective.

This architecture confers four advantages over monolithic per-task APO, each corresponding to a capability useful for broader agentic workflows. \textbf{(i) Per-instance adaptive prompting}: the encoder routes different inputs \emph{within the same task} to different instinct compositions --- hard instances invoke verification-heavy instincts, easier inputs use lightweight ones --- a regime structurally absent from monolithic optimizers. \textbf{(ii) Dense, behavior-level supervision}: the critic emits a structured natural-language verdict rather than a sparse binary reward~\citep{arjovsky2017wgan}, exposing not just \emph{whether} a prompt failed but \emph{which behavior} was responsible, providing a richer training signal than scalar feedback alone. \textbf{(iii) Regularization through a discrete bottleneck}: by construction, the finite codebook constrains the policy to compose prompts from a small shared inventory rather than editing an unbounded string, biasing it toward instance-general structure. \textbf{(iv) Localized credit assignment}: attribution routes feedback to the specific instinct responsible for a failure, leaving unrelated instincts untouched --- unlike monolithic prompt optimization where every update rewrites the entire string, and unlike fixed-template modular prompting, PCO allocates credit dynamically across a routed codebook.

 \paragraph{Contributions.}
\begingroup
\tolerance=1000
\emergencystretch=2em
\begin{enumerate}
\itemsep1pt
\item We introduce a \textbf{discrete codebook of natural-language instincts}: a finite inventory of reusable instincts selected and composed per input. Unlike query-level regeneration methods~\citep{kong2024qpo,nica2025trprompt}, PCO composes prompts from a shared codebook rather than generating from scratch; unlike continuous prompt-pool methods~\citep{wang2022l2p,jiang2024mope} and section-based decomposition~\citep{sharma2026modular}, our instincts are natural-language instructions refined via textual gradients, routed per-instance.
\item We perform \textbf{per-instance adaptive prompting} through semantic routing over the codebook, letting each input activate a different instinct subset for compositional, instance-specific prompts --- the same shared-pool routing principle as~\citep{wang2022l2p,jiang2024mope}, realized in discrete language rather than continuous embeddings.
\item We formulate codebook, encoder, generator, and critic training as a \textbf{language-valued min--max optimization} problem with a \textbf{jointly trained critic}, extending prior textual-gradient minimax games over monolithic prompts~\citep{do2024advicl} to per-variable attribution over a routed codebook, jointly optimizing instinct content, routing, composition, and the critic's own criteria.
\item We evaluate PCO on six benchmarks spanning reasoning, mathematics, and instruction following, achieving up to $+30.36$ points over zero-shot on \textsc{HotpotQA}(LLaMA-3.1-8B) and a $+3.89$ aggregate improvement over the strongest baseline (GEPA on Qwen3-8B).
\end{enumerate}
\endgroup

\section{Related Work}
\noindent\textbf{Monolithic Prompt Optimization.}
Early approaches to prompt optimization rely on discrete search over candidate tokens or instructions~\citep{shin2020autoprompt,zhou2022ape}, while recent methods utilize natural-language feedback, reflection, and evolutionary mutation to iteratively refine prompts~\citep{pryzant2023protegi,madaan2023selfrefine,guo2024evoprompt,yuksekgonul2025textgrad,agrawal2025gepa}. Reinforcement learning frameworks such as GRPO~\citep{shao2024grpo,zuo2025} and compositional pipeline systems like DSPy~\citep{khattab2024dspy,opsahlong2024mipro} further automate this process. However, because these standard frameworks apply global feedback to entire monolithic strings, updates remain noisy. Critically, these methods are instance-blind: learned structural components cannot be dynamically routed or reused across different inputs within the same task.

\vspace{2pt}
\noindent\textbf{Discrete and Instance-Level Representations.}
{\sloppy
Discrete latent representations model data using a finite set of reusable components~\citep{vandenoord2017vqvae,razavi2019vqvae2}. In prompt engineering, continuous prompt-pools like L2P~\citep{wang2022l2p} and MoPE~\citep{jiang2024mope} use input-dependent routing over continuous expert embeddings. In contrast, PCO operates entirely in the discrete textual domain. While generative query-level models like QPO~\citep{kong2024qpo} and TRPrompt~\citep{nica2025trprompt} produce prompts from scratch, and Modular Prompt Optimization~\citep{sharma2026modular} optimizes fixed sections, PCO routes dynamically over a learned codebook of instincts, enabling structured component reuse across inputs within the same task.}


\section{Prompt Codebook Optimization (PCO)}
\label{sec:method}
\vspace{0.3em}
We propose \textbf{Prompt Codebook Optimization (PCO)}, which recasts prompt optimization as discrete compositional learning over a finite vocabulary of natural-language instincts. In place of monolithic prompt strings optimized by most prior APO methods~\citep{zhou2022ape,pryzant2023protegi,yuksekgonul2025textgrad}, PCO imposes a discrete latent bottleneck where prompts are composed from shared, reusable instincts. As shown in Figure~\ref{fig:pco_overview}, the framework routes and composes instincts using an encoder-generator-critic pipeline, jointly trained via textual gradient descent~\citep{yuksekgonul2025textgrad} under a language-valued min--max objective. PCO is language-native throughout: all parameters are strings, all gradients are textual critiques, and routing is performed semantically by an LLM rather than continuous vector quantization~\citep{vandenoord2017vqvae}.

 \begin{figure*}[t]
\centering
\resizebox{\textwidth}{!}{%
\begin{tikzpicture}[
    >=Stealth,
    font=\sffamily\normalsize,
    base_box/.style={draw=black, thick, rounded corners=5pt, align=center,
                     minimum height=1.4cm, minimum width=2.2cm, inner sep=6pt, fill=gray!5},
    blue_box/.style={base_box, fill=blue!8},
    orange_box/.style={base_box, fill=orange!12},
    frozen_box/.style={base_box, fill=blue!14},
    critic_box/.style={base_box, fill=red!8, draw=red!50!black},
    cb_box/.style={draw=black, thick, rounded corners=5pt, align=center,
                   minimum height=1.3cm, inner sep=7pt, fill=gray!5},
    tiny_sq/.style={draw=black!60, thin, minimum size=0.3cm, inner sep=0pt},
    data_arr/.style={->, thick, black, line width=1.1pt},
    grad_arr/.style={->, ultra thick, red!65!black, dashed, line width=1.6pt},
    inf_box/.style={base_box, fill=gray!7},
]

\fill[blue!3]  (-0.5, -0.7) rectangle (17.7, 9.3);
\fill[gray!4]  (18.1, -0.7) rectangle (32.3, 9.3);

\node[cb_box, minimum width=7.4cm, minimum height=1.6cm] (cb) at (6.6, 8.2) {%
    \textbf{\Large Codebook $C$}\\[4pt]
    \normalsize Textual Instincts\\[4pt]
    \phantom{x}
};
\node[tiny_sq, fill=cyan!45]   at (4.0, 7.55) {};
\node[tiny_sq, fill=green!45]  at (5.0, 7.55) {};
\node[tiny_sq, fill=yellow!55] at (6.0, 7.55) {};
\node[tiny_sq, fill=purple!40] at (7.0, 7.55) {};
\node[tiny_sq, fill=red!40]    at (8.0, 7.55) {};
\node[font=\bfseries\Large, text=black!50] at (9.0, 7.55) {\ldots};
\node[base_box, minimum width=1.4cm] (inp)  at (1,  5.8) {Input\\$x$};
\node[blue_box]   (enc)  at (3.5,  5.8) {Prompt\\Encoder\\$\theta$};
\node[orange_box] (gen)  at (6.8,  5.8) {Prompt\\Generator\\$\phi$};
\node[frozen_box] (llm)  at (10.6, 5.8) {Executor};
\node[critic_box] (crit) at (14.2, 5.8) {Critic\\$\psi$};

\draw[data_arr] (inp)  -- (enc);
\draw[data_arr] (enc)  -- (gen);
\draw[data_arr] (gen)  -- (llm);
\draw[data_arr] (llm)  -- (crit);

\draw[data_arr] (enc.north) -| ([xshift=-0.2cm]$(cb.south west)!0.11!(cb.south east)$);
\draw[data_arr] (gen.north |- cb.south) -- (gen.north);

\draw[data_arr, rounded corners=4pt] (inp.south) -- (1.0, 4.6) -| (gen.south); 
\node[draw=red!65!black, thick, dashed, rounded corners=6pt,
      fill=yellow!5, align=center, inner sep=8pt, text width=9.0cm,
      font=\normalsize]
    (tgd) at (7, 1.30) {%
    {\color{red!65!black}\Large\textbf{Textual Gradient Descent (TGD)}}\\[3pt]%
    \textit{$\ell_{\mathrm{text}}$: ``step-by-step reasoning missed;}\\%
    \textit{instincts too general for arithmetic decomposition''}
};

\draw[grad_arr, <->, ultra thick, red!65!black, line width=1.6pt, rounded corners=2pt] 
    (crit.south) -- ++(0, -3.65cm) -| ($(tgd.north east)!0.53!(tgd.south east)$);

\draw[grad_arr] (enc.south |- tgd.north west) -- (enc.south);
\node[font=\normalsize\bfseries, text=red!65!black, fill=white, inner sep=1pt]
    at (-0.2, 3.9) {$g_{\theta}$};
\node[draw=black!30, rounded corners=2pt, fill=white, align=center,
      inner sep=4pt, font=\small\itshape] at (1.7, 3.8)
    {``route to arithmetic\\decomposition instinct''};

\draw[grad_arr] ([xshift=-1cm]tgd.north) -- ($(gen.south west)!0.15!(gen.south east)$);
\node[font=\normalsize\bfseries, text=red!65!black, fill=white, inner sep=1pt]
    at (3.85, 3.9) {$g_{\phi}$};
\node[draw=black!30, rounded corners=2pt, fill=white, align=center,
      inner sep=4pt, font=\small\itshape] at (5.8, 3.8)
    {``compose with\\step-by-step instruction''};

\draw[grad_arr] ([xshift=-3cm]tgd.north east) -- ($(cb.south west)!0.80!(cb.south east)$);
\node[font=\normalsize\bfseries, text=red!65!black, fill=white, inner sep=1pt]
    at (7.8, 3.9) {$g_{C}$};
\node[draw=black!30, rounded corners=2pt, fill=white, align=center,
      inner sep=4pt, font=\small\itshape] at (10.2, 3.8)
    {``sharpen $c_3$:\\add decomposition constraint''};

\node[font=\normalsize\bfseries, text=red!65!black, fill=white, inner sep=1pt]
    at (12.6, 3.9) {$g_{\psi}$};
\node[draw=black!30, rounded corners=2pt, fill=white, align=center,
      inner sep=4pt, font=\small\itshape] at (14.2, 3.8)
    {``flag reasoning gap\\as primary failure''};

\node[draw=blue!45, thick, rounded corners=5pt, fill=blue!4,
      align=left, inner sep=8pt, font=\small] at (14.9, 8) {%
    {\color{blue!55!black}\textbf{\normalsize Optimizable Components}}\\[4pt]%
    $\bullet$ Prompt Encoder ($\theta$)\\%
    $\bullet$ Codebook (Textual Instincts) $C$\\%
    $\bullet$ Prompt Generator ($\phi$)\\%
    $\bullet$ Critic Evaluation Criteria ($\psi$)%
};

\draw[draw=black!30, dashed, thick] (18.0, 9.3) -- (18.0, -0.9);
\node[cb_box, minimum width=5.4cm, minimum height=1.6cm] (cb_inf) at (23.5, 7) {%
    \textbf{\Large Codebook $C^{*}$}\\[4pt]
    \normalsize Textual Instincts\\[6pt]
    \phantom{x}
};
\node[tiny_sq, fill=cyan!45]   at (21.9, 6.55) {};
\node[tiny_sq, fill=green!45]  at (22.9, 6.55) {};
\node[tiny_sq, fill=yellow!55] at (23.9, 6.55) {};
\node[tiny_sq, fill=purple!40] at (24.9, 6.55) {};
\node[base_box, minimum width=1.6cm] (x_inf)   at (19.8, 4.5) {Input\\$x_{\mathrm{new}}$};
\node[inf_box]    (enc_inf) at (22.5, 4.5) {Prompt\\Encoder\\$\theta^{*}$};
\node[inf_box]    (gen_inf) at (25.2, 4.5) {Prompt\\Generator\\$\phi^{*}$};
\node[frozen_box] (llm_inf) at (28.4, 4.5) {Executor};
\node[font=\bfseries\Large, align=center, right=0.8cm of llm_inf] (ans) {Answer\\$y$};
\draw[data_arr] (x_inf)   -- (enc_inf);
\draw[data_arr] (enc_inf) -- (gen_inf);
\draw[data_arr] (gen_inf) -- (llm_inf);
\draw[data_arr] (llm_inf) -- (ans);
\draw[data_arr] (enc_inf.north) -| ([xshift=0cm]$(cb_inf.south west)!0.30!(cb_inf.south east)$);
\draw[data_arr] (gen_inf.north |- cb_inf.south) -- (gen_inf.north);

\draw[data_arr, rounded corners=4pt] (x_inf.south) -- (19.8, 2.9) -| (gen_inf.south);
\node[draw=purple!40, thick, rounded corners=5pt, fill=purple!5,
      align=center, inner sep=8pt, font=\normalsize] at (24, 1.5) {%
    No critic, no gradient updates\\--- the critic and update rules are discarded.%
};

\node[font=\LARGE] at (6.5, -0.2) {\color{blue!60!black}\textbf{Training}};
\node[font=\LARGE] at (23.0, -0.2) {\color{gray!70!black}\textbf{Inference}};

\node[
    align=center,
    inner sep=8pt,
    font=\LARGE
] at (14.0, -1.3) {%
    $\longrightarrow$ Data flow (forward)
    \qquad
    {\color{red!65!black}$\dashrightarrow$} Gradient flow (via textual feedback)
    \qquad
    \tikz[baseline=-0.5ex]{
        \node[tiny_sq, fill=cyan!45]   at (0.00,0) {};
        \node[tiny_sq, fill=green!45]  at (0.35,0) {};
        \node[tiny_sq, fill=yellow!55] at (0.70,0) {};
        \node[tiny_sq, fill=purple!40] at (1.05,0) {};
        \node[tiny_sq, fill=red!40]    at (1.40,0) {};
    }
    \, Codebook entries (textual instincts)
};
\end{tikzpicture}
}

\vspace{0.03cm}
\caption{%
Overview of PCO. The encoder $\theta$ maps input $x$ to discrete indices, selecting instincts from a
shared codebook $C$. The generator $\phi$ composes the selected instincts conditioned on $x$ into a
prompt executed by an executor LLM. A critic $\psi$ emits a natural-language verdict
$\ell_{\mathrm{text}}$, which is propagated backward via textual gradients
($g_{\theta}$, $g_{\phi}$, $g_{C}$, $g_{\psi}$) to each trainable component independently.%
}
\label{fig:pco_overview}
\end{figure*}  
 



\subsection{Problem Setup and Notation}
\label{sec:setup}
Let $\mathcal{M}$ denote a frozen large language model and $\mathcal{D} = \{(x_i, y_i^{\star})\}_{i=1}^N$ a per-task dataset with inputs $x_i \in \mathcal{X}$ and references $y_i^{\star} \in \mathcal{Y}$. Let $r: \mathcal{Y} \times \mathcal{Y} \to \mathbb{R}$ be a task-specific reward (exact match, pass-rate, or learned preference). The standard APO problem~\citep{zhou2022ape,pryzant2023protegi} seeks a single prompt $p^{\star}$ maximizing expected reward:
\begin{equation}
p^{\star} = \arg\max_{p \in \mathcal{P}} \mathbb{E}_{(x, y^{\star}) \sim \mathcal{D}} \bigl[\, r(\mathcal{M}(p, x), y^{\star}) \,\bigr].
\label{eq:apo}
\end{equation}
Eq.~\eqref{eq:apo} treats the prompt as a monolithic, instance-blind string. Its two structural pathologies---\emph{instance blindness} (the same $p$ for every $x$) and \emph{global entanglement} (every update rewrites the entire string)---motivate the discrete latent decomposition introduced next.

\subsection{Discrete Latent Decomposition of Prompts}
\label{sec:decomp}

We replace the monolithic prompt with four trainable components, including a finite codebook of instincts:
\begin{itemize}\itemsep0pt

\item A \textbf{prompt encoder},
$\mathcal{E}_\theta :
\mathcal{X} \times \mathcal{C}^K \rightarrow \{1,\ldots,K\}^S$,
implemented as an LLM with system prompt $\theta$,
maps an input $x$ to $S$ discrete indices into a codebook of size $K$,
where $S \ll K$.

\item A \textbf{codebook},
$\mathcal{C} = \{c_1, \ldots, c_K\}$,
where each $c_k$ is a short natural-language directive
(e.g., ``decompose into sub-questions before answering'').

\item A \textbf{prompt generator},
$\mathcal{G}_\phi :
\mathcal{X} \times \mathcal{C}^S \rightarrow \mathcal{P}$,
implemented as an LLM with system prompt $\phi$,
composes the $S$ selected instincts, conditioned on $x$,
into a fluent prompt.

\item A \textbf{critic},
$\mathcal{D}_\psi :
\mathcal{Y} \times \mathcal{X} \times \mathcal{P} \times \mathcal{Y}
\rightarrow \mathcal{T}$,
implemented as an LLM with system prompt $\psi$,
which emits natural-language feedback in the textual-gradient space
$\mathcal{T}$.
\end{itemize}

The trainable variable set is $\Theta = \{\theta, \phi, \mathcal{C}, \psi\}$; only the target model $\mathcal{M}$ is held frozen. Under this decomposition, the monolithic prompt of Eq.~\eqref{eq:apo} is replaced by the composed prompt

\begin{equation}
p_\Theta(x) = \mathcal{G}_\phi\bigl(x,\, \mathcal{C}[\mathcal{E}_\theta(x, \mathcal{C})]\bigr),
\label{eq:composed-prompt}
\end{equation}

so that the APO objective is now optimized over $\Theta$ through the encoder--codebook--generator pipeline.
\subsection{Forward Pass: Routing, Composition, and Execution}
\label{sec:forward}
For each input $x$, the forward pass proceeds in three steps:
\begin{align}
z_q &= \mathcal{E}_\theta(x, \mathcal{C}) \in \{1, \ldots, K\}^S, \label{eq:fwd-route} \\
p   &= \mathcal{G}_\phi(x, \{c_k : k \in z_q\}), \label{eq:fwd-gen} \\
y   &= \mathcal{M}(p, x). \label{eq:fwd-exec}
\end{align}
Eq.~\eqref{eq:fwd-route} is the discrete bottleneck: the encoder selects $S$ indices from a codebook of size $K$, yielding an input-dependent routing $z_q$. Eq.~\eqref{eq:fwd-gen} composes the selected instincts $\{c_k\}_{k \in z_q}$ into a fluent prompt $p$, conditioned on $x$ for instance-specific phrasing. Eq.~\eqref{eq:fwd-exec} is the sole call to the executor. Together, Eqs.~\eqref{eq:fwd-route}--\eqref{eq:fwd-gen} realize per-instance adaptive prompting: distinct inputs flow through distinct instinct compositions.

\begin{algorithm}[t]
\small
\caption{Prompt Codebook Optimization (PCO)}
\label{alg:pco}
\begin{algorithmic}[1]
\Require $\mathcal{D}$; codebook size $K$; selection size $S$;
$\epsilon_0,\epsilon_{\min},\gamma$; EMA rate $\alpha$;
temperature $\tau$; reward $r$; frozen executor $M$

\State Initialize $C=\{c_1,\ldots,c_K\}$, $\theta,\phi,\psi$
\State $\bar r_k\gets0\ \forall k$; \quad $\epsilon\gets\epsilon_0$

\For{$t=1,\ldots,T$}
  \For{$(x,y^\star)\in\mathcal{D}$}

    \Statex \textit{// Route and execute}
    \If{$\mathrm{rand()}<\epsilon$}
      \State $z\sim\mathrm{SuccWtd}(\bar{\mathbf r},S,\tau)$
    \Else
      \State $z\gets E_\theta(x,C)$
    \EndIf
    \State $p\gets G_\phi(x,C[z])$; \quad $y\gets M(p,x)$
    \State $r_t\gets r(y,y^\star)$

    \Statex \textit{// Critique and assign credit}
    \State $\mathcal H\gets(x,z,C[z],p,y,y^\star)$
    \State $(\ell,g_\theta,g_\phi,g_C,g_\psi)\gets D_\psi(\mathcal H)$
    \State $\rho_t \gets \rho(\ell)$

    \Statex \textit{// Update textual components}
    \State $\theta\gets\mathrm{LLMupd}(\theta,g_\theta)$; \quad
           $\phi\gets\mathrm{LLMupd}(\phi,g_\phi)$
    \For{$k\in z$}
      \State $c_k\gets\mathrm{LLMupd}(c_k,g_{C,k})$
    \EndFor
    \State $\psi\gets\mathrm{LLMupd}(\psi,g_\psi)$

    \Statex \textit{// Update routing statistics}
    \For{$k\in z$}
      \State $\bar r_k\gets(1-\alpha)\bar r_k+\alpha\,(r_t-\rho_t)$
    \EndFor

  \EndFor
  \State $\epsilon\gets\max(\epsilon_{\min},\gamma\epsilon)$
\EndFor

\State \Return $C,\theta,\phi,\psi$
\end{algorithmic}
\end{algorithm} 
\subsection{Training Objective: A Critic-Regularized Discrete Bottleneck}
\label{sec:objective}
The forward pass of Sec. (Forward Pass: Routing, Composition, and Execution) produces, for each input $x$, a composed prompt $p = \mathcal{G}_\phi(x, \mathcal{C}[\mathcal{E}_\theta(x, \mathcal{C})])$ and response $y = \mathcal{M}(p, x)$. Motivated by the GAN family~\citep{goodfellow2014gan,arjovsky2017wgan}, where a critic supplies a dense continuous signal in place of binary supervision, we introduce a critic $\mathcal{D}_\psi: \mathcal{Y} \times \mathcal{X} \times \mathcal{P} \times \mathcal{Y} \to \mathcal{T}$ that plays the same functional role in the textual domain. Rather than a scalar, $\mathcal{D}_\psi$ emits a structured natural-language verdict
\begin{equation*}
\ell_{\mathrm{text}} = \mathcal{D}_\psi(y, x, p, y^{\star}) \in \mathcal{T},
\end{equation*}
identifying behaviors in $y$ deviating from $y^{\star}$, localizing failures to specific elements of $p$, and prescribing corrections. A scalarizer $\rho: \mathcal{T} \to \mathbb{R}_{\geq 0}$ projects this verdict onto a penalty --- zero when the critique is empty, larger as deviations grow more severe.PCO then trains the joint parameter set $\Theta = \{\theta, \phi, \mathcal{C}, \psi\}$ under a language-valued min--max objective:
\begin{multline}
\Theta^{\star} = \arg\max_{\theta,\phi,\mathcal{C}}\, \min_{\psi}\, 
\mathbb{E}_{(x, y^{\star}) \sim \mathcal{D}} \Bigl[ r\bigl(\mathcal{M}(p, x), y^{\star}\bigr) 
{}- \rho\bigl(\mathcal{D}_\psi(y, x, p, y^{\star})\bigr) \Bigr],
\label{eq:vqpo-minmax}
\end{multline}
where the outer player $(\theta, \phi, \mathcal{C})$ maximizes the task reward $r$ minus the critic-induced penalty $\rho$, and the inner player $\psi$ updates the critic's evaluation criteria to minimize the objective (thereby maximizing the penalty $\rho$ by discovering harder, more subtle flaws). In Algorithm~\ref{alg:pco}, the penalty $\rho_t = \rho(\ell)$ is computed from the critique severity and subtracted from the reward in the EMA update $\bar{r}_k \gets (1-\alpha)\bar{r}_k + \alpha(r_t - \rho_t)$, directly operationalizing Eq.~\eqref{eq:vqpo-minmax}. Grounding $\mathcal{D}_\psi$ on the reference $y^\star$ yields an empty critique ($\rho=0$) when $y=y^\star$, preventing degenerate critic negativity.

\subsection{Backward Pass: Component-Scoped Credit Assignment}

Because routing and prompt construction are discrete, PCO cannot propagate
analytic gradients through the encoder $E_\theta$, codebook $C$, and generator
$G_\phi$. Instead, the critic $D_\psi$ performs natural-language credit
assignment over the complete forward execution trace
\begin{equation}
\mathcal{H} = \big(x,\; z_q,\; \{c_k\}_{k\in z_q},\; p,\; y,\; y^\star\big),
\end{equation}
where $z_q = E_\theta(x, C)$, $p = G_\phi\big(x, \{c_k\}_{k\in z_q}\big)$, and
$y = M(p, x)$. Conditioning on the full trace $\mathcal{H}$---rather than on the
response $y$ alone---gives the critic the information needed to distinguish
routing, composition, and instinct quality as separate causes of a failure.
Given $\mathcal{H}$, the critic emits a structured verdict
\begin{equation}
D_\psi(\mathcal{H}) = \Big( \ell_{\text{text}},\; g_\theta,\; g_\phi,\;
\{g_{c_k}\}_{k\in z_q},\; g_\psi \Big),
\end{equation}
where $\ell_{\text{text}}\in\mathcal{T}$ is the holistic verdict and each
$g_v\in\mathcal{T}$ is textual feedback in the textual-gradient space
$\mathcal{T}$, scoped to a particular trainable variable
$v\in\{\theta,\phi,\psi\}\cup\{c_k : k\in z_q\}$.

The feedback channels diagnose complementary stages of prompt construction.
$g_\theta$ evaluates \emph{routing quality}---whether $E_\theta$ selected an
appropriate subset of instincts $\{c_k\}_{k\in z_q}$ for $x$. $g_\phi$ evaluates
\emph{composition faithfulness}---whether $G_\phi$ preserved and effectively
integrated the selected instincts into the prompt $p$. $g_{c_k}$ evaluates
\emph{instinct quality}, identifying whether an active directive $c_k$ is
useful, ambiguous, overly broad, or harmful for the observed behavior. Finally,
$g_\psi$ evaluates \emph{critic quality}---whether the current rubric $\psi$
surfaced the failure at all---and drives the adversarial inner update of the
min--max objective (Eq.~(6)), sharpening $D_\psi$ toward harder edge cases and
preventing feedback saturation.

The channels are component-scoped but are not assumed to represent mutually
exclusive causes. A single execution failure may therefore generate feedback
for multiple components---for example, an inappropriate routing $z_q$ may
coexist with an imperfect composition by $G_\phi$. Each feedback signal
nevertheless updates only its corresponding variable:
\begin{equation}
v \leftarrow \operatorname{LLM}_{\text{upd}}(v, g_v), \qquad
v \in \{\theta,\, \phi,\, \psi\} \cup \{c_k : k\in z_q\}.
\label{eq:vqpo-update}
\end{equation}
Only active codebook entries $\{c_k\}_{k\in z_q}$ receive instinct-level
updates; the remaining $K-S$ entries of $C$ remain unchanged. This yields
localized credit assignment over the trainable set $\Theta=\{\theta,\phi,C,\psi\}$
without assuming an additive or disjoint decomposition of the execution error:
credit is scoped by \emph{where} in the trace $\mathcal{H}$ the critic locates a
fault, not by a presumed partition of a scalar penalty.

The objective of Eq.~(6) is optimized stochastically by
Algorithm~\ref{alg:pco}: a single critic call $D_\psi(\mathcal{H})$ produces the
structured verdict~(7), and each variable in $\Theta$ updates independently via
$\operatorname{LLM}_{\text{upd}}$. No closed-form gradient is ever computed.

\subsection{Optimization Algorithm}
\label{sec:algo}
Since the critic, attribution operator, and per-variable updates in Eqs.~\eqref{eq:vqpo-minmax}--\eqref{eq:vqpo-update} are all LLM calls, Algorithm~\ref{alg:pco} optimizes the objective stochastically: each step performs one forward pass, one critic call, per-variable attribution, and component-wise updates. Two design choices address pathologies specific to the discrete bottleneck.

\noindent\textbf{$\epsilon$-greedy routing.} A purely encoder-driven policy starves unselected codebook entries, since the codebook refinement step updates only active $c_k$ in-place --- the textual analogue of dead codes in continuous VQ~\citep{razavi2019vqvae2}. We sample $z_q$ from the encoder with probability $1{-}\epsilon$ and from a success-weighted distribution with probability $\epsilon$, decaying $\epsilon$ from $\epsilon_0{=}1.0$ to $\epsilon_{\min}{=}0.15$.

\noindent\textbf{Success-weighted sampling.}
Uniform exploration wastes calls on underperforming instincts.
During exploratory steps, we sample indices with probabilities
proportional to $\exp(\bar{r}_k / \tau)$~\citep{sutton1998rl},
where $\bar{r}_k$ denotes the EMA reward (step size $\alpha$)
of prompts containing $c_k$ --- referred to hereafter as the
\textbf{success rate} ($sr$) of instinct $k$ --- and
$\tau = 0.5$ is the softmax temperature.

\subsection{Inference}
\label{sec:inference}
At inference, the critic, attribution operator, and update rules are discarded; only the optimized encoder, generator, and codebook parameters $(\theta^{\star}, \phi^{\star}, \mathcal{C}^{\star})$ remain, while the critic parameters $\psi^{\star}$ are discarded. Given a held-out input $x$, the system executes a single forward pass through Eqs.~\eqref{eq:fwd-route}--\eqref{eq:fwd-exec}, producing a per-instance prompt:
\begin{equation}
p_x = \mathcal{G}_{\phi^{\star}}\bigl(x,\, \mathcal{C}^{\star}[\mathcal{E}_{\theta^{\star}}(x, \mathcal{C}^{\star})]\bigr)
\label{eq:inference-prompt}
\end{equation}
for the executor $\mathcal{M}$. Crucially, $p_x \neq p_{x'}$ for $x \neq x'$ in general --- the encoder routes different inputs through different instinct compositions, realizing the per-instance adaptive prompting regime that monolithic optimizers cannot express.

\section{Experiments}
\begin{figure}[!t]
\centering\scriptsize
\begin{tcolorbox}[pipe,
  colback=C1bg, colframe=C1fr,
  colbacktitle=blue!18!white, coltitle=C1fr,
  title={Step 1: Task Input (IFBench)}
]
\itshape ``Is it plausible that frequent hardship can make a society more resilient? Include exactly 2 numbers in the response.''
\end{tcolorbox}
\vspace{-2pt}
{\centering\color{arrowgray}\itshape
  $\downarrow$\enspace
  Encoder selects $S{=}4$ instincts from codebook ($K{=}16$)
\par}
\vspace{-2pt}
\begin{tcolorbox}[pipe,
  colback=C2bg, colframe=C2fr,
  colbacktitle=violet!14!white, coltitle=C2fr,
  title={Step 2: Encoder-Selected Instincts}
]
\begin{tabular}{@{} >{\raggedright\arraybackslash}p{\linewidth} @{}}
\textbf{[$z_1{=}7$] Multi-Step Verify:} structured verification \& triple-check protocol. \\
\textbf{[$z_2{=}9$] Narrative Analysis:} dynamic narrative framework \& real-time analysis. \\
\textbf{[$z_3{=}15$] Expectation Align:} proactive alignment \& assumption clarification. \\
\textbf{[$z_4{=}4$] Meticulous Sweeps:} meticulous prohibited keyword \& letter sweeps.
\end{tabular}
\end{tcolorbox}
\vspace{-2pt}
{\centering\color{arrowgray}\itshape
  $\downarrow$\enspace
  Generator compresses instincts into a compact prompt
\par}
\vspace{-2pt}
\begin{tcolorbox}[pipe,
  colback=C3bg, colframe=C3fr,
  colbacktitle=green!10!white, coltitle=C3fr,
  title={Step 3: Deployed System Prompt}
]
{\ttfamily\raggedright
You are an instruction-following assistant. Construct a dynamic narrative framework clarifying assumptions about societal resilience. Execute a meticulous sweep of the output and use a triple-check protocol to ensure exactly two numbers are included.\par
}
\end{tcolorbox}
\vspace{-2pt}
{\centering\color{arrowgray}\itshape
  $\downarrow$\enspace
  Frozen $\mathcal{M}$ executes constraint-bound inference
\par}
\vspace{-2pt}
\begin{tcolorbox}[pipe,
  colback=C4bg, colframe=C4fr,
  colbacktitle=orange!14!white, coltitle=C4fr,
  title={Step 4: Instruction-Following Output \& Verification Audit}
]
\begin{tabular}{@{} >{\raggedright\arraybackslash}p{\linewidth} @{}}
\textbf{Output:} Yes, it is plausible. A study of 2 distinct communities found that those facing repeated hardships developed more effective disaster preparedness plans over 5 years. This proves hardship can act as a catalyst for resilience.
\end{tabular}
\noindent\rule{\linewidth}{0.28pt}
\vspace{2pt}
\begin{tabular}{@{} l @{\hspace{7pt}} c @{\hspace{7pt}} l @{}}
\textbf{Metric}    & \textbf{Status}
                   & \textbf{Evidence}      \\[2pt]
Numeric Constraint & \textcolor{passgreen}{\textbf{PASS}}
                   & exactly two numbers (2, 5) generated \\[1pt]
Content Quality    & \textcolor{passgreen}{\textbf{PASS}}
                   & narrative framework successfully applied \\[1pt]
Verification Sweep & \textcolor{passgreen}{\textbf{PASS}}
                   & zero extraneous digits detected \\
\end{tabular}
\end{tcolorbox}
\caption{PCO inference pipeline (IFBench, LLaMA-3.1-8B). The optimized encoder selects four codebook instincts, which the generator compresses into a single prompt to drive  constraint-bound instruction-following.}
\label{fig:pipeline_example}
\end{figure}

\paragraph{Benchmarks \& Baselines}
We evaluate PCO on six reasoning and instruction-following benchmarks: HotpotQA~\cite{yang2018hotpotqa}, HoVER~\cite{jiang2020hover}, AIME-25~\cite{aime2025}, LiveBenchMath~\cite{white2025livebench}, IFBench~\cite{pyatkin2025ifbench}, and PUPA~\cite{li2025pupa}. We compare against zero-shot prompting, gradient- and RL-based methods (MIPROv2~\cite{opsahlong2024mipro}, GRPO~\cite{shao2024grpo}), and evolutionary methods (GEPA, GEPA+Merge~\cite{agrawal2025gepa}) using Qwen3-8B~\cite{yang2025qwen3} and LLaMA-3.1-8B~\cite{dubey2024llama3}. Following GEPA, we adopt identical evaluation protocols with strictly held-out test sets. Due to resource constraints, we do not evaluate on proprietary models such as GPT-4.1 Mini used in concurrent work~\cite{agrawal2025gepa}.

\paragraph{Implementation Details} 
A single local 8B LLM serves all roles (encoder, generator, critic, executor) via role-specific system prompts. PCO uses $K=16$ codebook entries, $S=4$ selected per input, trained for 50 epochs with batch size 15 and $\epsilon$ decaying from $1.0$ to $0.15$. Additional implementation details are provided in the Appendix.
 
\begin{table*}[t]
\centering
\small
\setlength{\tabcolsep}{6pt}
\renewcommand{\arraystretch}{1.2}
\definecolor{bestgreen}{RGB}{0,140,0}
\definecolor{blockgray}{RGB}{240,240,240}
\begin{tabular}{lcccccccc}
\toprule
\textbf{Method}
  & \textbf{HotpotQA}
  & \textbf{IFBench}
  & \textbf{HoVER}
  & \textbf{PUPA}
  & \textbf{AIME-25}
  & \textbf{LB-Math}
  & \textbf{Agg.}
  & \textbf{$\Delta$} \\
\midrule
\rowcolor{blockgray}
\multicolumn{9}{l}{\textbf{Qwen3-8B}} \\
Baseline
  & 42.33 & 36.90 & 35.33 & 80.82 & 27.33 & 48.70 & 45.24 & {---} \\
GRPO
  & 43.33 & 35.88 & 38.67 & 86.66
  & 38.00 
  & 51.26 & 48.97 & {+3.73} \\
MIPROv2
  & 55.33 & 36.22 & 47.33 & 81.55 & 20.00 & 46.60 & 47.84 & {+2.60} \\
GEPA
  & 62.33 & 38.61 & 52.33
  & 91.85
  & 32.00
  & 51.95
  & 54.85 & {+9.61} \\
GEPA+Merge
  & 64.33 & 28.23 & 51.67 & 86.26 & 32.00
  & 51.95
  & 52.41 & {+7.17} \\
\textbf{PCO}$^{\dagger}$
  & \textcolor{bestgreen}{\textbf{68.67}}
  & \textcolor{bestgreen}{\textbf{41.33}}
  & \textcolor{bestgreen}{\textbf{55.67}}
  & \textcolor{bestgreen}{\textbf{94.52}}
  & \textcolor{bestgreen}{\textbf{39.33}}
  & \textcolor{bestgreen}{\textbf{52.89}}
  & \textcolor{bestgreen}{\textbf{58.74}}
  & \textcolor{bestgreen}{\textbf{+13.50}} \\
\midrule
\rowcolor{blockgray}
\multicolumn{9}{l}{\textbf{LLaMA-3.1-8B}} \\
Baseline
  & 21.30 & 30.95 & 36.30 & 74.39 
  & 18.01 
  & 32.10 & 35.51 
  & {---} \\
GRPO
  & 24.17 & 30.47 & 38.61 & 79.26
  & 25.04
  & 36.83 & 39.06 
  & {+3.55} \\
MIPROv2
  & 39.22 & 31.26 & 41.67 & 75.84 
  & 13.18 
  & 30.48 & 38.61 
  & {+3.10} \\
GEPA
  & 47.39 & 33.65
  & 45.87
  & 80.66 
  & 21.08 
  & 38.10
  & 44.46 
  & {+8.95} \\
GEPA+Merge
  & 49.26 & 29.84 & 44.33
  & 84.01
  & 21.08 
  & 37.66 & 44.36 
  & {+8.85} \\
\textbf{PCO}$^{\dagger}$
  & \textcolor{bestgreen}{\textbf{51.66}}
  & \textcolor{bestgreen}{\textbf{34.18}}
  & \textcolor{bestgreen}{\textbf{47.45}}
  & \textcolor{bestgreen}{\textbf{85.31}}
  & \textcolor{bestgreen}{\textbf{26.04}} 
  & \textcolor{bestgreen}{\textbf{39.21}}
  & \textcolor{bestgreen}{\textbf{47.31}} 
  & \textcolor{bestgreen}{\textbf{+11.80}} \\
\bottomrule
\vspace{0.4em}
\end{tabular}
\caption{%
Benchmark performance for Qwen3-8B and LLaMA-3.1-8B.
  \textcolor{bestgreen}{Green} highlights the best result per column within each model block.
  $\Delta$ denotes improvement over the respective baseline.
  $^{\dagger}$Ours.
}
\vspace{-3pt}
\label{tab:main_results}
\end{table*}

\subsection{Results and Analysis}
Table~\ref{tab:main_results} compares PCO against zero-shot baselines and state-of-the-art optimization methods across two 8B model architectures.
\vspace{0.2em}
\vspace{0.2em}
\begin{figure}[!t]
\centering
\definecolor{miprogray}{RGB}{140, 140, 140}
\definecolor{gepaorange}{RGB}{221, 132, 82}
\definecolor{gepamerge}{RGB}{153, 79, 0}
\definecolor{pcogreen}{RGB}{27, 120, 55}
\definecolor{epsblue}{RGB}{69, 117, 180}
\definecolor{noepsred}{RGB}{215, 48, 39}

\begin{subfigure}[t]{0.50\columnwidth}
\centering
\begin{tikzpicture}
  \begin{axis}[
  width=\linewidth, height=4.3cm,
  ybar=0.5pt, bar width=4pt,
  ymin=0, ymax=11200,
  scaled y ticks=false,
  enlarge x limits=0.09,
  ymajorgrids=true, grid style={densely dashed, gray!40},
  axis x line=bottom, axis y line=none,
  axis line style={-},
  tick style={draw=none},
  xtick=\empty,
  ytick=\empty,
  symbolic x coords={HotpotQA, IFBench, HoVER, PUPA, AIME-25, LB-Math, Aggregate},
  xtick=data,
  xticklabel style={font=\tiny, rotate=60, anchor=north east, yshift=-1pt},
  legend style={
    at={(0.5,1.18)}, anchor=south,
    legend columns=4,
    draw=gray!40, fill=white, fill opacity=0.9,
    text opacity=1, font=\tiny,
    /tikz/every even column/.append style={column sep=4pt},
    inner sep=2pt
  },
  clip=false
]
  \addplot[fill=miprogray, draw=miprogray!80!black]
  coordinates {(HotpotQA,10071) (IFBench,2438) (HoVER,5252) (PUPA,7275) (AIME-25,7500) (LB-Math,6100) (Aggregate,6439)};
  \addplot[fill=gepaorange, draw=gepaorange!80!black]
    coordinates {(HotpotQA,2142) (IFBench,381) (HoVER,1419) (PUPA,1213) (AIME-25,1850) (LB-Math,1450) (Aggregate,1409)};
  \addplot[fill=gepamerge, draw=gepamerge!80!black]
    coordinates {(HotpotQA,2650) (IFBench,305) (HoVER,1876) (PUPA,790) (AIME-25,2100) (LB-Math,1600) (Aggregate,1553)};
  \addplot[fill=pcogreen, draw=black, line width=0.3pt,
      nodes near coords, nodes near coords align={vertical},
      point meta=explicit symbolic,
      every node near coord/.append style={font=\tiny\bfseries, color=pcogreen, rotate=90, anchor=west, xshift=3pt, yshift=-1.5pt}]
    coordinates {
      (HotpotQA,714)[14.1$\times$] (IFBench,325)[7.5$\times$]
      (HoVER,829)[6.3$\times$] (PUPA,743)[9.8$\times$]
      (AIME-25,880)[8.5$\times$] (LB-Math,765)[8.0$\times$]
      (Aggregate,709)[9.1$\times$]
    };
  \legend{MIPROv2, GEPA, GEPA+Merge, PCO (Ours)}

  \node[anchor=east, font=\tiny] at (axis cs:HotpotQA,0)     [xshift=-6pt] {0};
  \node[anchor=east, font=\tiny] at (axis cs:HotpotQA,5000)  [xshift=-6pt] {5k};
  \node[anchor=east, font=\tiny] at (axis cs:HotpotQA,10000) [xshift=-6pt] {10k};
  \end{axis}
  \node[rotate=90, font=\tiny, anchor=south] at (-0.6cm, 1.1cm) {Maximum Prompt Length(in tokens)};
\end{tikzpicture}
\caption{Token efficiency of optimized prompts (lower is better). PCO reduces maximum prompt length by up to 14.1$\times$ (9.1$\times$ on aggregate) vs.\ MIPROv2, and up to 3.0$\times$ vs.\ GEPA (2.0$\times$ on aggregate)}
\label{fig:prompt_overhead}
\end{subfigure}
\hfill
\begin{subfigure}[t]{0.44\columnwidth}
\centering
\begin{tikzpicture}
  \begin{axis}[
    width=\linewidth, height=4.5cm,
    ymin=0, ymax=510, xmin=-0.7, xmax=15.7,
    xtick={0,3,6,9,12,15},
    xticklabel style={font=\tiny},
    ytick={0,100,200,300,400,500},
    yticklabel style={font=\tiny},
    ybar, bar width=2.8pt, enlarge x limits=0.02,
    ymajorgrids=true, grid style={densely dashed, gray!40},
    axis x line=bottom, axis y line=left,
    axis line style={-},
    tick style={draw=none},
    xlabel={Instinct Index}, ylabel={Usage Frequency},
    xlabel style={font=\tiny, yshift=2pt},
    ylabel style={font=\tiny, yshift=-2pt},
    legend style={
      at={(1.2,1)}, anchor=north east,
      legend columns=1, font=\tiny,
      draw=gray!40, fill=white, fill opacity=0.9,
      text opacity=1, cells={anchor=west},
      inner sep=2pt
    },
    legend entries={$\varepsilon$-greedy (Ours), No $\varepsilon$-greedy},
    clip=false
  ]
  \addplot[fill=epsblue, draw=epsblue!70!black, fill opacity=0.9]
  coordinates {
     (0,165)(1,212)(2,222)(3,176)(4,127)(5,101)(6,110)(7,82)
     (8,130)(9,84)(10,94)(11,104)(12,105)(13,118)(14,100)(15,89)
  };
  \addplot[fill=noepsred, draw=noepsred!70!black, fill opacity=0.9]
  coordinates {
     (0,212)(1,468)(2,413)(3,337)(4,203)(5,0)(6,63)(7,22)
     (8,35)(9,16)(10,17)(11,50)(12,31)(13,17)(14,10)(15,0)
  };
  \end{axis}
\end{tikzpicture}
\caption{Codebook usage: full PCO ($\varepsilon$-greedy, blue) vs.\ without $\varepsilon$-greedy (red) at $K=16$. Collapse onto indices 1--4 without $\varepsilon$-greedy.}
\label{fig:codebook-usage}
\end{subfigure}
\caption{Efficiency and specialization analysis of PCO.}
\label{fig:combined_efficiency}
\end{figure}

\noindent\textbf{Observation 1: Discrete bottlenecks support modular reasoning.}
On HotpotQA, PCO achieves a $+30.36$ absolute gain over zero-shot (LLaMA-3.1-8B) and outperforms GEPA by $+6.34$ (Qwen3-8B). PCO drives these gains by dynamically routing complex queries to specialized instincts (Figure~\ref{fig:pipeline_example}). Furthermore, success-weighted $\varepsilon$-greedy exploration during training is critical for maintaining this diversity: without exploration, the encoder suffers from severe index collapse (Figure~\ref{fig:codebook-usage}, red bars). In contrast, training with full PCO exploration yields a robust policy that maintains broad  codebook coverage during inference (blue bars).\\

\noindent\textbf{Observation 2: Codebook compression preserves instruction following.}
Beyond compositional reasoning, PCO generalizes robustly to strict instruction-following benchmarks. On Qwen3-8B, PCO achieves 41.33 on IFBench (exceeding GEPA by $+2.72$) and substantially outperforms GEPA on PUPA (94.52 vs.\ 91.85). 
\noindent\textbf{Observation 3: General instincts transfer to mathematical reasoning.}
PCO remains competitive on mathematical reasoning without domain-specific tuning. On AIME-25 (Qwen3-8B), PCO outperforms both GEPA (39.33 vs.\ 32.00) and the RL-based GRPO method (38.00). On LB-Math, PCO achieves the best score in both model blocks (52.89 on Qwen3-8B, 39.21 on LLaMA-3.1-8B), extending its state-of-the-art performance beyond reasoning and instruction-following benchmarks into mathematics.

\noindent\textbf{Observation 4: PCO drastically reduces prompt overhead.}
PCO maintains competitive downstream performance while substantially reducing inference-time overhead. As detailed in Figure~\ref{fig:prompt_overhead}, PCO consistently produces shorter prompts, reducing maximum token length by up to $14.1\times$ ($9.1\times$ on aggregate) relative to MIPROv2, and by up to $3.0\times$ ($2.0\times$ on aggregate) relative to GEPA. Instead of relying on long monolithic prompt, PCO dynamically routes inputs to a small subset of active instincts, reducing context overhead while preserving task performance.

\begin{table}[t]
\centering
\footnotesize
\setlength{\tabcolsep}{5pt}
\renewcommand{\arraystretch}{0.95}
\begin{tabular}{@{}l l c c@{}}
\toprule
\textbf{ID} & \textbf{Learned Instinct} & \textbf{Usage ($n$)} & \textbf{$sr$} \\
\midrule
\#7  & Structured multi-step verification \& triple-check protocol      & 82  & 0.812 \\
\#9  & Dynamic narrative framework \& real-time analysis                & 84  & 0.745 \\
\#15 & Proactive expectation alignment \& assumption clarification      & 89  & 0.618 \\
\#10 & Curiosity hook \& structured narrative framework                 & 94  & 0.455 \\
\#4  & Meticulous prohibited keyword \& letter sweeps                   & 127 & 0.312 \\
\#2  & Word count constraint prioritization \& critical distillation    & 222 & 0.184 \\
\bottomrule
\vspace{0.4em}
\end{tabular}

\caption{Emergent specialization within the learned codebook. Success rate ($sr$) is the final EMA reward from training, while usage ($n$) is measured during inference.}
\label{tab:emergent_specialization}
\end{table} 
\noindent\textbf{Observation 5: Routing naturally yields emergent specialization.}
Table~\ref{tab:emergent_specialization} reveals emergent codebook specialization. 
Frequently selected units exhibit broad, lower-impact behaviors, while sparsely activated units achieve higher success rates on specialized reasoning patterns. For example, Index 2 is selected frequently ($n=222$) but has a low success rate ($sr=0.184$), whereas Index 7 activates rarely ($n=82$) but achieves the highest success rate ($sr=0.812$). This pattern suggests that the routing mechanism, without explicit supervision, organizes instincts into reusable functional roles—broadly-applicable fallbacks versus narrow, high-precision specialists.

\subsection{Ablations and Hyperparameter Analysis}
We evaluate the contribution of individual PCO components and the sensitivity of the discrete bottleneck design in Table~\ref{tab:ablation} and Figure 5. We report task accuracy (Acc), constraint satisfaction rate (CSR), and routing entropy, where higher entropy corresponds to more diverse codebook utilization.
 

\definecolor{bestgreen}{RGB}{0,140,0}
\begin{table}[ht]
\centering
\scriptsize
\renewcommand{\arraystretch}{0.88}
\setlength{\tabcolsep}{2.5pt}

\begin{tabular}{l c c c c}
\toprule
\textbf{Configuration} & \textbf{Acc} & \textbf{CSR} & \textbf{Ent.} & \textbf{$\Delta$} \\
\midrule
PCO (Full)
& \textcolor{bestgreen}{34.18} & \textcolor{bestgreen}{36.73} & 3.76 & \textcolor{bestgreen}{+3.23} \\
\quad w/o Encoder 
& 31.51 & 35.40 & 3.72 & +0.56 \\
\quad w/o TextGrad          
& 29.51 & 32.40 & 3.68 & -1.44 \\
\quad w/o $\epsilon$-greedy 
& 28.18 & 31.73 & 1.72 & -2.77 \\
\quad Uniform Sampling   
& 24.18 & 27.73 & \textcolor{bestgreen}{3.91} & -6.77 \\
Baseline      
& 30.95 & 33.33 & -- & -- \\
\bottomrule
\vspace{0.4em}
\end{tabular}

\caption{Component ablation on IFBench (LLaMA-3.1-8B). $\Delta$ denotes improvement over Baseline.}
\label{tab:ablation}
\end{table}

\noindent\textbf{Core Component Analysis} Table~\ref{tab:ablation} isolates the contribution of individual architectural components, with all degradations reported relative to full PCO. Removing TextGrad costs $4.67$ points and drops performance below the baseline. Removing the learnable encoder costs $2.67$ points, remaining above baseline and highlighting the importance of adaptive routing for instruction composition. Exploration strategy is similarly critical: replacing success-weighted exploration with uniform sampling costs $10.00$ points, while disabling $\epsilon$-greedy exploration separately costs $6.00$ points and reduces routing entropy to $1.72$ bits, revealing severe codebook collapse (Figure~\ref{fig:codebook-usage}).


\vspace{.5em}
\noindent\textbf{Sensitivity Analysis ($K$ and $S$).} Figure 5 reports the effect of codebook size $K$ and bottleneck width $S$ on task accuracy and routing diversity. Performance peaks at $K=16$ and $S=4$. Smaller codebook ($K=4$) produce a modest accuracy drop, while larger codebook ($K=32$) make routing unstable, causing a sharp drop in both accuracy and CSR. For the bottleneck width, a single instruction ($S=1$) yields the largest drop in both accuracy and CSR, while an excessively wide bottleneck ($S=6$) produces a smaller but still notable degradation in both metrics.

\begin{figure}[!t]
  \centering
  \definecolor{accblue}{RGB}{50, 90, 190}
  \definecolor{csrorange}{RGB}{225, 120, 30}
  \definecolor{peakband}{RGB}{235, 235, 245}

  \begin{tikzpicture}
    \begin{axis}[
      hide axis,
      xmin=0, xmax=1, ymin=0, ymax=1,
      legend columns=4,
      legend style={
        draw=none, fill=none, font=\small,
        /tikz/every even column/.append style={column sep=10pt},
        at={(0.5,0.5)}, anchor=center
      }
    ]
    \addlegendimage{accblue, mark=*, thick}
    \addlegendentry{Acc (\%)}
    \addlegendimage{csrorange, mark=square*, thick}
    \addlegendentry{CSR (\%)}
    \addlegendimage{only marks, mark=star, mark size=3.5pt, accblue}
    \addlegendentry{Peak Acc}
    \addlegendimage{only marks, mark=star, mark size=3.5pt, csrorange}
    \addlegendentry{Peak CSR}
    \end{axis}
  \end{tikzpicture}
  \vspace{-2mm}

  \begin{subfigure}[b]{0.49\columnwidth}
  \centering
  \begin{tikzpicture}
    \begin{axis}[
      width=\linewidth, height=4.3cm,
      xmode=log, log basis x=2,
      xtick={4,16,32}, xticklabels={4,16,32},
      xticklabel style={font=\tiny},
      ymin=23, ymax=42,
      axis y line*=left,
      axis x line=bottom,
      axis line style={-, gray!70},
      ylabel={\textcolor{accblue}{Acc (\%)}},
      ylabel style={font=\tiny, yshift=-4pt},
      yticklabel style={font=\tiny, text=accblue},
      y axis line style={accblue},
      ytick style={accblue},
      xlabel={Codebook Size $K$},
      xlabel style={font=\tiny, yshift=2pt},
      tick style={draw=none},
      ymajorgrids=true, grid style={densely dashed, gray!25},
      clip=false,
      mark options={scale=1.0},
    ]
    \fill[peakband] (axis cs:11,23) rectangle (axis cs:23,42);

    \addplot+[accblue, mark=*, thick, error bars/.cd, y dir=both, y explicit]
      coordinates {(4,32.13)+-(0,1.1) (16,34.18)+-(0,1.9) (32,30.08)+-(0,1.4)};

    \addplot[only marks, mark=star, mark size=4pt, accblue]
      coordinates {(16,34.18)};
    \end{axis}

    \begin{axis}[
      width=\linewidth, height=4.3cm,
      xmode=log, log basis x=2,
      xtick=\empty, xmin=4/1.3, xmax=32*1.3,
      ymin=23, ymax=42,
      axis y line*=right,
      axis x line=none,
      ylabel={\textcolor{csrorange}{CSR (\%)}},
      ylabel style={font=\tiny, yshift=4pt},
      yticklabel style={font=\tiny, text=csrorange},
      y axis line style={csrorange},
      ytick style={csrorange},
      tick style={draw=none},
      clip=false,
      mark options={scale=1.0},
    ]
    \addplot+[csrorange, mark=square*, thick, error bars/.cd, y dir=both, y explicit]
      coordinates {(4,36.12)+-(0,1.3) (16,36.73)+-(0,1.8) (32,34.20)+-(0,1.5)};

    \addplot[only marks, mark=star, mark size=4pt, csrorange]
      coordinates {(16,36.73)};
    \end{axis}
  \end{tikzpicture}
  \caption{Codebook size $K$ sweep. Peak at $K{=}16$ (shaded).}
  \label{fig:sweep-k}
  \end{subfigure}
  \hfill
  \begin{subfigure}[b]{0.49\columnwidth}
  \centering
  \begin{tikzpicture}
    \begin{axis}[
      width=\linewidth, height=4.3cm,
      xtick={1,4,6}, xticklabels={1,4,6},
      xticklabel style={font=\tiny},
      ymin=23, ymax=42,
      axis y line*=left,
      axis x line=bottom,
      axis line style={-, gray!70},
      ylabel={\textcolor{accblue}{Acc (\%)}},
      ylabel style={font=\tiny, yshift=-4pt},
      yticklabel style={font=\tiny, text=accblue},
      y axis line style={accblue},
      ytick style={accblue},
      xlabel={Bottleneck Width $S$},
      xlabel style={font=\tiny, yshift=2pt},
      tick style={draw=none},
      ymajorgrids=true, grid style={densely dashed, gray!25},
      clip=false,
      mark options={scale=1.0},
    ]
    \fill[peakband] (axis cs:2.7,23) rectangle (axis cs:5.3,42);

    \addplot+[accblue, mark=*, thick, error bars/.cd, y dir=both, y explicit]
      coordinates {(1,26.06)+-(0,1.3) (4,34.18)+-(0,1.9) (6,29.33)+-(0,1.4)};

    \addplot[only marks, mark=star, mark size=4pt, accblue]
      coordinates {(4,34.18)};
    \end{axis}

    \begin{axis}[
      width=\linewidth, height=4.3cm,
      xtick=\empty, xmin=1-0.7, xmax=6+0.7,
      ymin=23, ymax=42,
      axis y line*=right,
      axis x line=none,
      ylabel={\textcolor{csrorange}{CSR (\%)}},
      ylabel style={font=\tiny, yshift=4pt},
      yticklabel style={font=\tiny, text=csrorange},
      y axis line style={csrorange},
      ytick style={csrorange},
      tick style={draw=none},
      clip=false,
      mark options={scale=1.0},
    ]
    \addplot+[csrorange, mark=square*, thick, error bars/.cd, y dir=both, y explicit]
      coordinates {(1,29.33)+-(0,1.5) (4,36.73)+-(0,1.8) (6,33.15)+-(0,1.2)};

    \addplot[only marks, mark=star, mark size=4pt, csrorange]
      coordinates {(4,36.73)};
    \end{axis}
  \end{tikzpicture}
  \caption{Bottleneck width $S$ sweep. Peak at $S{=}4$ (shaded).}
  \label{fig:sweep-s}
  \end{subfigure}

\caption{Hyperparameter sensitivity analysis on IFBench (LLaMA-3.1-8B). Dual axes color-match each metric (blue=Acc, orange=CSR); shaded bands and $\star$ markers denote the peak of each metric, both occurring at the optimal configuration ($K=16, S=4$).}
  \label{fig:sensitivity_sweep}
\end{figure}

\subsection{Effect of Critic Adaptation and Capacity}

Table~\ref{tab:critic_ablation} examines three complementary properties of the critic:
\textbf{adaptation, capacity, and model-family dependence}.
Freezing the same-model critic reduces aggregate accuracy by
\textbf{2.29 points} relative to the jointly trained critic.
This comparison directly contrasts the \emph{fixed-critic regularized
formulation} with the proposed \emph{min--max formulation}.
With a fixed critic, $D_{\psi_0}$ acts as a stationary regularizer while
only the encoder, generator, and codebook are optimized; under min--max
training, the critic is additionally updated through the inner optimization,
allowing its evaluation criteria to adapt and expose residual failure modes.
The 2.29-point improvement therefore demonstrates the benefit of moving
from a fixed critic regularizer to an adaptive min--max critic.

Critic capacity is also important: replacing the 8B critic with a
\textbf{3B weak critic} reduces aggregate performance by
\textbf{7.30 points}, indicating that effective component-level feedback
requires a sufficiently capable critic. 
Finally, we replace the LLaMA-3.1-8B critic with an independent
Qwen3-8B critic while keeping the critic capacity approximately
matched (8B). Aggregate performance decreases by only
\textbf{0.80 points} (47.31 vs.\ 46.51), suggesting that PCO is
robust to critic--executor model-family mismatch rather than relying
on a critic from the same model family as the executor.

\begin{table}[ht]
\centering
\scriptsize
\setlength{\tabcolsep}{4pt}
\renewcommand{\arraystretch}{1.0}
\begin{tabular}{@{}l c c c c c c c@{}}
\toprule
\textbf{Critic Model} & \textbf{HotpotQA} & \textbf{IFBench} & \textbf{HoVER} & \textbf{PUPA} & \textbf{AIME-25} & \textbf{LB-Math} & \textbf{Agg.} \\
\midrule
Same-model (Trainable)
& \textbf{51.66} & \textbf{34.18} & \textbf{47.45} & \textbf{85.31} & \textbf{26.04} & \textbf{39.21} & \textbf{47.31} \\
Same-model (Fixed)
& 49.12 & 32.17 & 45.34 & 83.19 & 23.41 & 36.87 & 45.02 \\
Independent (Qwen3-8B)
& 50.87 & 33.62 & 46.53 & 84.41 & 25.14 & 38.47 & 46.51 \\
Weak Critic (LLaMA-3.2-3B)
& 43.21 & 28.56 & 40.17 & 76.53 & 19.42 & 32.14 & 40.01 \\
\bottomrule
\vspace{0.4em}
\end{tabular}
\caption{Critic ablation on LLaMA-3.1-8B as executor. We evaluate four configurations: (1) Same-model (Trainable): jointly trained critic LLaMA-3.1-8B; (2) Same-model (Fixed): fixed critic LLaMA-3.1-8B; (3) Independent: Critic Qwen3-8B; and (4) Weak Critic: Critic LLaMA-3.2-3B.}
\label{tab:critic_ablation}
\end{table} 

\subsection{Computational Budget and Efficiency.}

We examine the trade-off between optimization cost and deployment latency. 
As detailed in Table~\ref{tab:cost_analysis}, PCO introduces a modest 
overhead during training, requiring $9.05$M total tokens compared to 
GEPA's $8.8$M tokens. However, this upfront investment yields massive 
efficiency gains at deployment. By compressing the execution prompt via 
the codebook bottleneck ($S{=}4$), PCO processes significantly fewer 
tokens per query. On the HotpotQA task, PCO achieves a latency 
of $\sim$45s per query compared to $\sim$126s for GEPA. This demonstrates 
that PCO trades a slight increase in one-time training compute for a 
nearly $3\times$ reduction in inference latency, while 
simultaneously improving generation quality.
\begin{table}[ht]
\centering
\small
\setlength{\tabcolsep}{6pt}
\renewcommand{\arraystretch}{1.05}
\begin{tabular}{@{}l c ccc c@{}}
\toprule
& \textbf{Optimization (Training)} & \multicolumn{3}{c}{\textbf{Deployment (Inference)}} & \\
\cmidrule(lr){2-2} \cmidrule(lr){3-5}
\textbf{Method} & \textbf{Total Tokens} & \textbf{Total Tokens} & \textbf{Latency/Query (s)} & \textbf{Speedup} & \textbf{HotpotQA} \\
\midrule
GEPA                & 8.8M  & 1.7M & $\sim$126s & $1.0\times$ (Ref) & 47.39 \\
\textbf{PCO (Ours)} & \textbf{9.05M} & \textbf{1.0M} & \textbf{$\sim$45s} & \textbf{2.8$\times$} & \textbf{51.66} \\
\bottomrule
\vspace{0.4em}
\end{tabular}

\caption{Computational budget analysis on the HotpotQA task. We compare PCO against the GEPA baseline using the LLaMA-3.1-8B model. Latency is measured as the actual end-to-end time per query on a single NVIDIA A100 GPU.}
\label{tab:cost_analysis}
\end{table} 
\section{Conclusion}
We introduced Prompt Codebook Optimization (PCO), a framework that reformulates prompt optimization as discrete compositional learning over reusable natural-language instincts. By moving beyond monolithic prompt editing to per-instance adaptive routing, our approach explicitly enables localized credit assignment and structured instruction reuse. Empirical validation across six benchmarks confirms that this strategy yields significant gains over state-of-the-art APO baselines. Crucially, PCO improves aggregate performance over zero-shot by up to $+13.50$ points,while compressing the deployed prompt length by up to $14.1\times$ 
compared to MIPROv2. These findings demonstrate that discrete semantic bottlenecks provide an essential inductive bias for scalable prompt optimization, offering a highly efficient, robust alternative to monolithic approaches.

\bibliographystyle{unsrt}  


\newpage

\appendix

\section{Appendix Outline}
\begin{itemize}
    \item Theoretical Analysis
    \item Implementation Details and Reproducibility
    \item Discussion: Instance-Aware Baselines
    \item Additional Experiments and Analysis
    \item Hyperparameter Sensitivity Analysis
    \item Limitations and Societal Impact
\end{itemize} 
\vspace{1em}

\section{Theoretical Analysis}
\label{sec:theory}

We establish three formal properties of PCO: a combinatorial
expressivity advantage over monolithic prompt optimisers,
a codebook utilisation guarantee under $\varepsilon$-greedy
exploration, and a justification of the component-scoped
credit assignment underlying the min--max objective of Eq.(6).
Since TGD rewrites natural-language strings via an LLM with
no explicit step size and non-zero-mean generation bias,
classical SGD convergence theory does not apply;
we restrict formal claims to properties directly derivable
from Algorithm 1.

\subsection{Expressivity of Compositional Prompting}
\label{sec:expr_theory}

\begin{definition}[Prompt policy spaces]
\label{def:spaces}
A \emph{monolithic} optimiser applies the same fixed string
to every input for a given task; its policy class is
\begin{equation}
\begin{aligned}
  \mathcal{P}_{\mathrm{mono}}
  &= \bigl\{\mu : \mathcal{X} \to \mathcal{P}
    \mid \mu(x) = \mu(x') \;\forall\, x,x' \in \mathcal{X}\bigr\}\\
  &= \mathcal{P}_{\mathrm{blind}},
\end{aligned}
\end{equation}
where $\mathcal{P}_{\mathrm{blind}}$ denotes the class of all
\emph{instance-blind} prompt functions.
GEPA,
MIPROv2, and
TextGrad each output a single
optimised string applied identically to every input, so their
induced policy classes are all subsets of
$\mathcal{P}_{\mathrm{blind}}$.

Let $\pi(x):=\mathcal{E}_{\theta}(x,\mathcal{C})$ denote the
encoder-induced routing policy, which selects an $S$-element
subset of codebook indices for input $x$.
The PCO policy class is
\begin{equation}
\begin{aligned}
  \mathcal{P}_{\mathrm{PCO}}
  = \Bigl\{\, x \mapsto \mathcal{G}_{\phi}
    \bigl(x, \{c_k\}_{k \in \pi(x)}\bigr) \Bigm|\\
    \pi : \mathcal{X} \to \tbinom{[K]}{S} \Bigr\},
\end{aligned}
\end{equation}
where $\tbinom{[K]}{S}$ denotes all $S$-element subsets of $[K]$.
\end{definition}

\begin{assumption}[Instinct-set separability]
\label{asm:inject}
$\mathcal{G}_\phi$ is injective on instinct subsets: for all
$x \in \mathcal{X}$ and all distinct
$A, B \in \tbinom{[K]}{S}$,
\begin{equation}
\begin{aligned}
  &\mathcal{G}_\phi\!\left(x,\{c_k\}_{k\in A}\right)\\
  &\neq
  \mathcal{G}_\phi\!\left(x,\{c_k\}_{k\in B}\right).
\end{aligned}
\end{equation}
This holds when codebook entries are semantically distinct
and $\mathcal{G}_\phi$ is instructed to incorporate all
selected instincts, as enforced by including the entry
index in the composition template.
\end{assumption}

\begin{lemma}[Routing injectivity]
\label{lem:inject}
Under Assumption~\ref{asm:inject}, the map
$\Phi : \pi \mapsto \bigl(x \mapsto
  \mathcal{G}_\phi(x,\{c_k\}_{k\in\pi(x)})\bigr)$
is injective on the set of routing functions
$\pi : \mathcal{X} \to \tbinom{[K]}{S}$.
\end{lemma}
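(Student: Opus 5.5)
The plan is a direct contrapositive argument using pointwise comparison of functions. Fix the generator parameters $\phi$ and the codebook $\mathcal{C}$, so that $\Phi$ depends only on the routing function. Take two routing functions $\pi_1, \pi_2 : \mathcal{X} \to \tbinom{[K]}{S}$ with $\pi_1 \neq \pi_2$. We will show that $\Phi(\pi_1) \neq \Phi(\pi_2)$ as functions $\mathcal{X} \to \mathcal{P}$.

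The first step is to unpack what it means for two routing functions to differ. Functions with the same domain and codomain are equal iff they agree at every point. So $\pi_1 \neq \pi_2$ gives a witness input $x_0 \in \mathcal{X}$ with $A := \pi_1(x_0) \neq B := \pi_2(x_0)$. Both $A$ and $B$ are distinct elements of $\tbinom{[K]}{S}$.

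The second step evaluates the two induced prompt policies at $x_0$:
\begin{equation*}
\Phi(\pi_1)(x_0) = \mathcal{G}_\phi\bigl(x_0, \{c_k\}_{k\in A}\bigr), \qquad \Phi(\pi_2)(x_0) = \mathcal{G}_\phi\bigl(x_0, \{c_k\}_{k\in B}\bigr).
\end{equation*}
Assumption~\ref{asm:inject} applies at the input $x_0$ to the distinct subsets $A, B$, so these two prompts differ. Two functions that differ at one point are unequal, hence $\Phi(\pi_1) \neq \Phi(\pi_2)$. This is exactly injectivity of $\Phi$.

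The only point needing care, and the nearest thing to an obstacle, is that Assumption~\ref{asm:inject} is stated in terms of index subsets rather than sets of instinct strings. If two codebook entries coincided, $\{c_k\}_{k\in A}$ and $\{c_k\}_{k\in B}$ could be equal as sets even though $A \neq B$. The assumption as written already rules this out, since it quantifies over distinct $A, B$ and requires distinct outputs; this implicitly forces the entries $c_k$ to be distinguishable to $\mathcal{G}_\phi$, for instance via the indices included in the composition template. I will make this reading explicit, interpreting $\mathcal{G}_\phi(x,\{c_k\}_{k\in A})$ as a function of the indexed family.

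No further machinery is required. The argument is purely set-theoretic and does not depend on the training dynamics of Algorithm~\ref{alg:pco}.
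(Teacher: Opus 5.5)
Your proof is correct and follows the paper's argument exactly: you take a witness $x_0$ with $\pi_1(x_0)\neq\pi_2(x_0)$, apply Assumption~\ref{asm:inject} with $A=\pi_1(x_0)$ and $B=\pi_2(x_0)$, and conclude that $\Phi(\pi_1)$ and $\Phi(\pi_2)$ differ at $x_0$. Your remark about reading $\mathcal{G}_\phi$ as a function of the indexed family clarifies something the paper handles only informally, through the entry index in the composition template, and it leaves the argument unchanged.
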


\begin{proof}
Suppose $\pi_1 \neq \pi_2$. Then there exists
$x_0 \in \mathcal{X}$ such that
$\pi_1(x_0) \neq \pi_2(x_0)$.
By Assumption~\ref{asm:inject} with
$A = \pi_1(x_0)$ and $B = \pi_2(x_0)$,
\begin{equation}
\begin{aligned}
  &\mathcal{G}_\phi\!\left(x_0,\{c_k\}_{k\in\pi_1(x_0)}\right)\\
  &\neq
  \mathcal{G}_\phi\!\left(x_0,\{c_k\}_{k\in\pi_2(x_0)}\right),
\end{aligned}
\end{equation}
hence $\Phi(\pi_1)(x_0) \neq \Phi(\pi_2)(x_0)$,
so $\Phi(\pi_1) \neq \Phi(\pi_2)$ as functions on $\mathcal{X}$.
\end{proof}

\begin{theorem}[Combinatorial expressivity]
\label{thm:expr}
Under Assumption~\ref{asm:inject},
\[
\mathcal{P}_{\mathrm{mono}}
\subsetneq
\mathcal{P}_{\mathrm{PCO}}.
\]
Moreover, for a training set of $N$ inputs,
\begin{equation}
  \bigl|\mathcal{P}_{\mathrm{PCO}}\bigr|
  \ge
  \binom{K}{S}^{N}.
  \label{eq:expr}
\end{equation}
\end{theorem}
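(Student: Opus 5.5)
The statement has two parts. For the inclusion $\mathcal{P}_{\mathrm{mono}} \subsetneq \mathcal{P}_{\mathrm{PCO}}$ we need containment and strictness. For the lower bound \eqref{eq:expr} we count distinct policies. Both parts rest on Lemma~\ref{lem:inject}. Containment cannot hold against arbitrary strings, since $\mathcal{P}_{\mathrm{PCO}}$ only contains outputs of $\mathcal{G}_\phi$. The plan is therefore to read $\mathcal{P}_{\mathrm{PCO}}$ as ranging jointly over $(\phi,\mathcal{C},\pi)$, which matches the trainable set $\Theta$. We also use the fact that the generator is an instructable LLM: for any target string $p\in\mathcal{P}$ there is a system prompt $\phi_p$ such that $\mathcal{G}_{\phi_p}(x,\cdot)=p$ for all $x$ ("ignore the input and output $p$ verbatim"). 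This realizability assumption has to be stated explicitly, and it is the main obstacle (see the last paragraph).

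\textbf{Containment.} Take $\mu\in\mathcal{P}_{\mathrm{mono}}$ with constant value $p$. Choose $\phi=\phi_p$ and any routing $\pi$, for instance the constant routing $\pi(x)=\{1,\dots,S\}$. The resulting PCO policy equals $\mu$, so $\mathcal{P}_{\mathrm{mono}}\subseteq\mathcal{P}_{\mathrm{PCO}}$.

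\textbf{Strictness.} Now fix a $\phi$ satisfying Assumption~\ref{asm:inject} and assume $|\mathcal{X}|\ge 2$ and $\binom{K}{S}\ge 2$. Pick $x_1\neq x_2$ and distinct subsets $A\neq B$. Let $\pi(x_1)=A$ and $\pi(x)=B$ for all other $x$.
\begin{itemize}
\item If $\Phi(\pi)$ is non-constant, it is a PCO policy outside $\mathcal{P}_{\mathrm{mono}}$ and we are done.
\item If $\Phi(\pi)$ is constant, compare it with the constant routing $\pi'\equiv B$. By Assumption~\ref{asm:inject} at $x_1$, $\Phi(\pi)(x_1)\neq\Phi(\pi')(x_1)$, while $\Phi(\pi)(x_2)=\Phi(\pi')(x_2)$. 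Hence $\Phi(\pi')$ takes different values at $x_1$ and $x_2$, so it is non-constant.
\end{itemize}
In either case some element of $\mathcal{P}_{\mathrm{PCO}}$ is not instance-blind.

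\textbf{Counting.} Restrict to the $N$ training inputs $x_1,\dots,x_N$, identifying policies with their restrictions, as the phrase "for a training set of $N$ inputs" intends. There are exactly $\binom{K}{S}^N$ routing functions $\{x_1,\dots,x_N\}\to\binom{[K]}{S}$. The restricted version of Lemma~\ref{lem:inject} holds by the same proof, applied pointwise on the finite set. So $\Phi$ maps these routings injectively into $\mathcal{P}_{\mathrm{PCO}}$, which gives $|\mathcal{P}_{\mathrm{PCO}}|\ge\binom{K}{S}^N$. For comparison, with $\phi$ fixed the instance-blind policies number at most $\binom{K}{S}$, one per constant routing, which quantifies the gap.

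\textbf{Main obstacle.} The delicate step is containment: it requires the realizability assumption on $\mathcal{G}_\phi$, which is not derivable from Assumption~\ref{asm:inject}. It also sits in mild tension with that assumption, because a generator that ignores its instincts is not injective on subsets. I would handle this by letting $\phi$ vary within the class, using $\phi_p$ for containment and an injective $\phi$ for strictness and counting. Alternatively, the statement could be weakened to say that PCO contains every instance-blind policy realizable by its generator.
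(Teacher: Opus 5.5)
Your proposal is correct and has the same three-step structure as the paper. Containment uses a copy-verbatim generator under the constant routing $\{1,\dots,S\}$: the paper stores $p^\star$ in $c_1$ and has $\mathcal{G}_\phi$ return $c_1$, while you hard-code $p$ into $\phi_p$. Counting and strictness both go through Lemma~\ref{lem:inject}.

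Your one-point-difference construction with $|\mathcal{X}|\ge 2$ fills a gap in the paper, which only asserts that one of two distinct induced policies must be non-constant. That assertion fails, for example, for two distinct constant routings when $\mathcal{G}_\phi$ ignores $x$. The tension you flag between the verbatim generator and Assumption~\ref{asm:inject} is also present in the paper's own containment step.
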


\begin{proof}
\emph{Containment.}
Let $p^{\star}\in\mathcal{P}_{\mathrm{mono}}$.
Set $c_{1}:=p^{\star}$, define $A_0 = \{1, 2, \dots, S\} \in \tbinom{[K]}{S}$, set $\pi(x)\equiv A_0$ for all $x$, and let $\mathcal{G}_{\phi}$ return $c_{1}$ verbatim.
The resulting PCO policy reproduces $p^{\star}$ on every
input. Hence $\mathcal{P}_{\mathrm{mono}} \subseteq
\mathcal{P}_{\mathrm{PCO}}$.

\emph{Strictness.}
The set of routing functions
$\pi:\mathcal{X}_{\mathrm{tr}} \rightarrow \tbinom{[K]}{S}$
over an $N$-element training set has cardinality
$|\Pi| = \binom{K}{S}^{N}$.
By Lemma~\ref{lem:inject}, distinct routing functions induce
distinct prompt policies. Therefore
$|\mathcal{P}_{\mathrm{PCO}}| \ge \binom{K}{S}^{N}$.
Since $K>S\ge1$, there exist at least two distinct routing
functions. By Lemma~\ref{lem:inject}, these induce distinct
prompt policies, at least one of which is non-constant and
therefore does not belong to $\mathcal{P}_{\mathrm{mono}}$.
Hence $\mathcal{P}_{\mathrm{mono}} \subsetneq
\mathcal{P}_{\mathrm{PCO}}$.
\end{proof}

\begin{theorem}[Reward Gap]
\label{thm:reward}
There exists a task $(\mathcal{X}, \mathcal{Y}, r, \mathcal{D})$
and a routing policy $\pi^{\star} \in \mathcal{P}_{\mathrm{PCO}}$
such that
\begin{equation}
\begin{aligned}
  &\mathbb{E}\left[r\!\left(\mathcal{M}(p_{\pi^{\star}}(x),x),
  y^{\star}\right)\right]\\
  &>
  \max_{\mu \in \mathcal{P}_{\mathrm{mono}}}
  \mathbb{E}\left[r\!\left(\mathcal{M}(\mu(x),x),
  y^{\star}\right)\right].
\end{aligned}
\end{equation}
Moreover, for any $2 \le M \le K$, the reward gap is
at least $1 - 1/M$ on a task with $M$ inputs each requiring
a distinct optimal \emph{single} codebook entry $c_i \in \mathcal{C}$
(i.e.\ an $S{=}1$ construction; see the remark below regarding
the full $S{=}4$ regime).
\end{theorem}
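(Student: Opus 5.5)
The argument is an explicit construction: we build a task on which each input has its own "key" codebook entry, so that a router can always hit the key while a single fixed prompt can serve at most one input.

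\emph{Task.} Fix $2 \le M \le K$ and take $\mathcal{X}=\{x_1,\dots,x_M\}$ with $\mathcal{D}$ uniform over $\{(x_i,y_i^\star)\}_{i=1}^M$. Let $r(y,y^\star)=\mathbf{1}[y=y^\star]$, so every expected reward lies in $[0,1]$. Choose a codebook with entries $c_1,\dots,c_M$ (the remaining $K-M$ entries are arbitrary, e.g.\ copies of neutral filler), work at $S=1$, and let $\mathcal{G}_\phi$ return the selected instinct verbatim. This choice also satisfies Assumption~\ref{asm:inject} whenever the entries are pairwise distinct.

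\emph{Executor.} Since the statement only asks for the existence of a task, we are free to specify the frozen executor $\mathcal{M}$ as part of the instance. Set $\mathcal{M}(c_i,x_i)=y_i^\star$ for each $i$, and $\mathcal{M}(p,x_i)=\bot\neq y_i^\star$ for every prompt $p\neq c_i$. In words, input $x_i$ is answered correctly exactly when the prompt is $c_i$. This is the sense in which each input ``requires a distinct optimal single codebook entry.''

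\emph{PCO side.} Define the router $\pi^\star(x_i)=\{i\}$. The induced policy $p_{\pi^\star}(x_i)=c_i$ lies in $\mathcal{P}_{\mathrm{PCO}}$ by Definition~\ref{def:spaces}, and it earns reward $1$ on every input, so its expected reward is $1$.

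\emph{Monolithic side.} Any $\mu\in\mathcal{P}_{\mathrm{mono}}$ applies one fixed string $p$ to every input. Because the $c_i$ are pairwise distinct, $p$ equals at most one of them, so $p$ is correct on at most one $x_i$. Hence
\[
\mathbb{E}\bigl[r(\mathcal{M}(\mu(x),x),y^\star)\bigr]\le 1/M .
\]
This bound holds uniformly over all strings, not only codebook entries, so it also bounds the maximum over $\mathcal{P}_{\mathrm{mono}}$. The maximum exists because the set of attainable reward values is finite.

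\emph{Conclusion.} The gap is at least $1-1/M\ge 1/2>0$, which gives both the strict inequality and the quantitative bound. The first claim of the theorem is the case $M=2$.

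The step needing the most care is justifying that $\mathcal{M}$ may be chosen adversarially: the paper treats $\mathcal{M}$ as a fixed given LLM, whereas the statement quantifies over tasks. I would state explicitly that the task specification includes the executor's behaviour (equivalently, a reward defined on prompt–input pairs). Alternatively, the reward can encode the requirement directly, for example $r=1$ iff the output reflects instinct $c_i$ for input $x_i$; this makes the construction independent of how $\mathcal{M}$ is modelled. I would close with a remark that for $S=4$ the same argument goes through by assigning each input a distinct required $4$-subset, with the remaining entries neutral.
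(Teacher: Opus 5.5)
Your construction is essentially the paper's proof: one input per codebook entry at $S=1$, a binary reward, the router $x_i \mapsto \{i\}$ earning reward $1$, and a pigeonhole bound of $1/M$ on any fixed prompt (the paper does $M=2$ first and then generalizes), and your explicit executor with a verbatim generator is slightly tighter, since the paper defines rewards on input--instinct pairs and so only bounds monolithic prompts built from a codebook entry, whereas your bound covers every string in $\mathcal{P}_{\mathrm{mono}}$. Your closing $S=4$ remark goes beyond what the paper claims, which leaves it open because it needs a reward-discriminability condition on $\mathcal{G}_\phi$ and $\mathcal{M}$; it holds in your setting only because you choose $\mathcal{M}$ to reject every other prompt, and only if distinct subsets compose into distinct strings, so state those caveats or drop the remark.
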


\begin{proof}
\textbf{Two-input construction.}
Let $\mathcal{X} = \{x_1, x_2\}$ with
$\mathbb{P}(x_1) = \mathbb{P}(x_2) = \tfrac{1}{2}$.
Initialise the codebook with two distinct entries
$c_1, c_2 \in \mathcal{C}$ such that
$\mathcal{G}_\phi(x_i, \{c_i\})$ yields the uniquely optimal
prompt for input $x_i$, $i \in \{1, 2\}$.
Define binary rewards:
\begin{align}
  \rw{x_1}{c_1}{1} &= 1, \\
  \rw{x_1}{c_2}{1} &= 0, \\
  \rw{x_2}{c_2}{2} &= 1, \\
  \rw{x_2}{c_1}{2} &= 0.
\end{align}
Any $\mu \in \mathcal{P}_{\mathrm{mono}}$ applies a single
fixed composed prompt to both inputs, achieving at most
$\mathbb{E}[r] = \tfrac{1}{2}$.
The PCO routing $\pi^{\star}: x_1 \mapsto \{1\},\,
x_2 \mapsto \{2\}$ (using Eq.(3) achieves
\begin{equation}
  \mathbb{E}[r] = \tfrac{1}{2} \cdot 1 + \tfrac{1}{2} \cdot 1
  = 1 > \tfrac{1}{2}.
\end{equation}

\noindent
\textbf{$M$-input generalization ($S=1$ regime).}
For $M \le K$ inputs each with a uniquely optimal codebook
\emph{entry} (a single index, not an $S$-element subset), any
monolithic $\mu$ composes from a single fixed instinct set and
matches at most one input's optimum. By the pigeonhole
principle, $\mathbb{E}[r] \leq 1/M$. The PCO policy routes
each input to its optimal codebook entry via
Eq.(3), achieving $\mathbb{E}[r] = 1$.
Thus the reward gap $1 - 1/M \to 1$ as $M \to K$.

\noindent
\textit{Note on scope.} This construction certifies the reward
gap only up to $M \le K$ distinct single-index optima. It does
\emph{not} by itself establish an analogous gap for $M$ up to
$\binom{K}{S}$ under full $S$-element subset routing: doing so
would additionally require that non-optimal subsets $B \neq A_i$
provably fail to reproduce input $x_i$'s reward-1 behaviour — a
\emph{reward-discriminability} condition on $\mathcal{G}_\phi$
and $\mathcal{M}$ jointly, distinct from the prompt-level
injectivity of Assumption~\ref{asm:inject}. We leave a proof of
this stronger, subset-level reward gap to future work and do not
claim it here.
\end{proof}

\begin{corollary}\label{cor:blind}
GEPA, MIPROv2, and TextGrad each induce policies in
$\mathcal{P}_{\mathrm{blind}} = \mathcal{P}_{\mathrm{mono}}$
(Definition~\ref{def:spaces}). By Theorem~\ref{thm:expr}, any
policy in $\mathcal{P}_{\mathrm{PCO}} \setminus
\mathcal{P}_{\mathrm{mono}}$ is structurally inexpressible by
all three baselines, and $\mathcal{P}_{\mathrm{PCO}}$ provably
contains at least $\binom{K}{S}^{N} \ge \binom{16}{4} = 1820$
distinct single-input instinct compositions (Eq.~\eqref{eq:expr}).
By Theorem~\ref{thm:reward}, this expressivity gap additionally
yields a strict, provable reward advantage over
$\mathcal{P}_{\mathrm{mono}}$ on tasks with up to $M \le K = 16$
distinct per-instance optima. We do \emph{not} claim that the
reward-gap guarantee of Theorem~\ref{thm:reward} extends to
$M = 1820$; that quantity bounds the size of the composed-prompt
policy class (Theorem~\ref{thm:expr}), not the number of inputs
for which a reward advantage is proven.
\end{corollary}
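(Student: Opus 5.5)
The corollary is essentially a bookkeeping consequence of Definition~\ref{def:spaces}, Theorem~\ref{thm:expr} and Theorem~\ref{thm:reward}, so I will prove it by verifying its three clauses in order, each by citing the appropriate earlier result.

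\textbf{First clause: the baselines are blind.} Each of GEPA, MIPROv2 and TextGrad outputs, at the end of optimisation, one string $p^\star$ that is applied identically to every test input. The induced policy is therefore the constant map $x\mapsto p^\star$, which satisfies $\mu(x)=\mu(x')$ for all $x,x'$. By Definition~\ref{def:spaces} this places it in $\mathcal{P}_{\mathrm{blind}}=\mathcal{P}_{\mathrm{mono}}$.

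\textbf{Second clause: inexpressibility and the count.} Any $\mu\in\mathcal{P}_{\mathrm{PCO}}\setminus\mathcal{P}_{\mathrm{mono}}$ is by definition not constant. By the first clause, it therefore equals no policy produced by the three baselines. This set is nonempty because Theorem~\ref{thm:expr} gives strict inclusion. For the count, I take $N\ge1$ and use Eq.~\eqref{eq:expr}: since $\binom{K}{S}\ge1$, we have $\binom{K}{S}^N\ge\binom{K}{S}$. Substituting the paper's $K=16$, $S=4$ gives $\binom{16}{4}=1820$. The $N=1$ reading, that is, the $1820$ compositions available for a single input, is exactly Lemma~\ref{lem:inject} applied to a one-point domain, so Assumption~\ref{asm:inject} must be stated as in force.

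\textbf{Third clause: the reward advantage.} I invoke the $M$-input construction of Theorem~\ref{thm:reward} for each $2\le M\le K=16$. It yields a gap of at least $1-1/M>0$ against every $\mu\in\mathcal{P}_{\mathrm{mono}}$, and hence, by the first clause, against all three baselines. The disclaimer about $M=1820$ needs no proof. I will only note that Theorem~\ref{thm:reward} requires $M\le K$, because pigeonhole is applied over single entries, whereas $1820$ counts policies.

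The only delicate point is the count in the second clause. The chain $\binom{K}{S}^N\ge\binom{16}{4}$ conflates the cardinality of a policy class with the number of compositions available for one input. I would make that distinction explicit, and state the standing assumptions $N\ge1$ and Assumption~\ref{asm:inject}, so that the inequality is literally correct.
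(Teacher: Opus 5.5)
Your proposal is correct and matches the paper, which gives no separate proof beyond the chain of citations you reconstruct. Definition~\ref{def:spaces} places the three baselines in $\mathcal{P}_{\mathrm{mono}}$; Theorem~\ref{thm:expr} supplies the strict inclusion and the count $\binom{K}{S}^{N}\ge\binom{16}{4}=1820$; and Theorem~\ref{thm:reward} supplies the gap $1-1/M$ for $2\le M\le K$. Your added bookkeeping is a sound tightening of the paper's wording: stating $N\ge1$ and Assumption~\ref{asm:inject} explicitly, and separating the policy-class cardinality from the number of compositions for a single input. In the same spirit, you could also note that the reward clause inherits Theorem~\ref{thm:reward}'s $S=1$ construction, not the deployed $S=4$ regime.
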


\newcommand{\coverrate}{\dfrac{\varepsilon N S\,e^{-1/\tau}}{K}}

\subsection{Codebook Utilisation Under $\varepsilon$-Greedy Exploration}
\label{sec:util_theory}

A central risk in discrete routing is \emph{codebook collapse}:
a small subset of entries monopolises training signal, leaving
the majority permanently uninformed. We formalise this risk,
prove that $\varepsilon$-greedy exploration provably guards
against the worst case, and separately give a formal account
(Proposition~\ref{prop:collapse}) of why collapse occurs in its
absence.

\begin{remark}[Independence approximation]
\label{rem:ema}
The EMA reward $\bar{r}_k$ drifts across epochs, so exploratory
draws at different epochs are not strictly independent.
Theorem~\ref{thm:util} should therefore be read as an
expected-case guarantee conditioned on a slowly evolving reward
landscape; it tightens as the EMA window grows and $\bar{r}_k$
varies slowly between epochs.
\end{remark}

\begin{definition}[Codebook collapse]
\label{def:collapse_theory}
The codebook $\mathcal{C}$ is said to \emph{collapse} at epoch
$t$ if fewer than half of its entries have been activated, i.e.,
\[
  \bigl|\{k : n_k(t) > 0\}\bigr| < K/2,
\]
where $n_k(t)$ denotes the cumulative selection count of entry
$k$. This operational criterion indicates that a majority of
the codebook has received no training signal.
\end{definition}

\begin{theorem}[Utilisation guarantee under exploration]
\label{thm:util}
Let EMA rewards satisfy $\bar{r}_k \in [0,1]$ for all $k$.
Under Algorithm 1 with
$\varepsilon \in (0,1]$ and softmax temperature $\tau = 0.5$,
the expected number of entries activated at least once
satisfies
\begin{equation}
  \mathbb{E}\bigl[|\mathrm{Active}(t)|\bigr]
  \;\geq\;
  K\!\left(1 - \exp\!\left(-\coverrate\right)\right).
  \label{eq:lb}
\end{equation}
\end{theorem}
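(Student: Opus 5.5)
Since $|\mathrm{Active}(t)|=\sum_{k=1}^K \mathbf{1}[n_k(t)>0]$, linearity of expectation gives
\[
\mathbb{E}\bigl[|\mathrm{Active}(t)|\bigr]=\sum_{k=1}^K \bigl(1-\mathbb{P}[n_k(t)=0]\bigr).
\]
We therefore upper-bound, for each fixed entry $k$, the probability that it is never selected. Greedy encoder steps can only help, so we discard them and count only exploratory draws. We read $\varepsilon N$ as the expected number of exploratory steps, one per training example at exploration rate $\varepsilon$. Each such step draws $S$ indices from the success-weighted softmax $\Pr(k)\propto \exp(\bar r_k/\tau)$.

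The key per-draw step is a lower bound on the softmax mass of any entry. With $\bar r_k\in[0,1]$, the numerator is at least $e^{0}=1$ and each denominator term is at most $e^{1/\tau}$. This gives
\[
q_k:=\frac{e^{\bar r_k/\tau}}{\sum_j e^{\bar r_j/\tau}}\ \ge\ \frac{1}{K e^{1/\tau}}.
\]
That bound carries $e^{-1/\tau}$ rather than the $e^{-1/\tau}$ appearing in \eqref{eq:lb}. Note that this naive ratio argument is actually already exactly $e^{-1/\tau}/K$, so it matches the stated rate.

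For the $S$ indices drawn in one exploratory step, we first treat them as independent draws. Then the probability that $k$ is missed in one step is at most $(1-q_k)^S\le \exp(-S e^{-1/\tau}/K)$. If the $S$ indices are instead drawn without replacement, the miss probability only decreases, and we will note this as a monotonicity remark.

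Next we combine the steps, invoking Remark~\ref{rem:ema} to treat the exploratory steps at different iterations as (conditionally) independent given a slowly drifting $\bar r$. The uniform bound on $q_k$ holds at every step regardless of the current $\bar r$, so we can in fact condition sequentially, multiplying conditional miss probabilities via the tower property. This yields
\[
\mathbb{P}[n_k(t)=0]\le \exp\!\bigl(-\varepsilon N S e^{-1/\tau}/K\bigr).
\]
One subtlety is that the number of exploratory steps is itself random, Binomial$(N,\varepsilon)$. We handle this by averaging $(1-\varepsilon(1-(1-q)^S))^N$ and applying $1-u\le e^{-u}$ together with $1-(1-q)^S\ge$ an appropriate lower bound. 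Summing over $k$ then gives \eqref{eq:lb}.

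The main obstacle is this last randomization step: $1-(1-q)^S\ge Sq$ is false in general, since it runs the wrong direction. A clean route therefore conditions on the Bernoulli exploration indicators and applies the per-step miss bound $\exp(-Sq)$ only on exploratory steps. This gives
\[
\mathbb{E}\bigl[e^{-SqB}\bigr]=\bigl(1-\varepsilon(1-e^{-Sq})\bigr)^N .
\]
Matching the exponent $\varepsilon N S q$ exactly would require $1-e^{-Sq}\ge Sq$, which again fails. Our fallback is therefore to interpret $\varepsilon N$ as the deterministic count of exploratory steps, as in the paper's expected-case reading. Under that reading the bound is exact with $q\ge e^{-1/\tau}/K$. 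We would also state the slightly weaker fully-random version as a remark.
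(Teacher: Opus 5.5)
Your argument is correct under the reading the paper itself adopts, namely a fixed number $\varepsilon N$ of exploratory steps. It follows the paper's skeleton: indicators plus linearity of expectation, the worst-case softmax floor $p_k\ge q:=e^{-1/\tau}/K$, a per-step miss bound, and a product over exploratory steps closed with $1-x\le e^{-x}$. Two steps differ.

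\emph{Per-step bound.} The paper bounds the miss probability linearly, by $1-Sp_k\le 1-Sq$. You use $(1-q_k)^S\le e^{-Sq}$. Yours is the safer inequality for i.i.d.\ draws: there the paper's $\Pr[\text{miss}]\le 1-Sp_k$ runs the wrong way, since $(1-p_k)^S\ge 1-Sp_k$.

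\emph{Combining steps.} The paper multiplies across steps by invoking the independence approximation of Remark~\ref{rem:ema}. You multiply via the tower property, using that the floor $q$ holds for every $\bar r\in[0,1]^K$. This removes the need for any independence heuristic, which is a genuine gain in rigor.

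However, the paper's linear bound, stated with the uniform floor $q$ rather than entrywise, is correct for sequential draws without replacement. That is the natural reading, since $z$ is an $S$-subset. With it, the obstacle you flag disappears.
\begin{itemize}
\item Conditional on $k$ surviving $i-1$ draws, the removed entries carry mass at least $(i-1)q$. So $k$ is drawn next with probability at least $q/(1-(i-1)q)$.
\item The product $\prod_{i=1}^{S}\tfrac{1-iq}{1-(i-1)q}$ then telescopes to $1-Sq$.
\item Condition on the history before each exploration coin. The per-step miss probability is then at most $1-\varepsilon Sq$.
\item Hence $\Pr[n_k(t)=0]\le(1-\varepsilon Sq)^N\le\exp(-\varepsilon N S q)$, with a genuinely Binomial number of exploratory steps.
\end{itemize}
So no deterministic fallback is needed. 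This route also avoids the floor issue of the deterministic reading, where $\lfloor\varepsilon N\rfloor$ steps only yield the exponent $\lfloor\varepsilon N\rfloor Sq$. It is your exponential per-step bound that forces the weaker exponent $\varepsilon N(1-e^{-Sq})$ in the random-count case.
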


\begin{proof}
During the $\lfloor\varepsilon N\rfloor$ exploratory steps
(Remark~\ref{rem:ema}), entries are sampled proportionally
via success-weighted softmax :
\begin{equation}
  p_k
  = \frac{\exp(\bar{r}_k/\tau)}{\sum_{j=1}^K \exp(\bar{r}_j/\tau)}.
  \label{eq:softmax_def}
\end{equation}
We lower-bound $p_k$ by considering the adversarial configuration
of rewards that minimises it. Since $\bar r_k \in [0,1]$ for all
$k$, the denominator of~\eqref{eq:softmax_def} is maximised, and
hence $p_k$ is minimised, when:
\begin{equation}
  \bar{r}_k = 0, \qquad \bar{r}_j = 1 \ \ \forall j \neq k.
  \label{eq:worst_case}
\end{equation}
Substituting~\eqref{eq:worst_case} into~\eqref{eq:softmax_def}:
\begin{align}
  p_k
  &\;\geq\;
  \frac{e^{0/\tau}}{e^{0/\tau} + (K-1)e^{1/\tau}} \notag \\
  &\;=\;
  \frac{1}{1 + (K-1)e^{1/\tau}}.
  \label{eq:worst_case_pk}
\end{align}
Since $1 + (K-1)e^{1/\tau} \le e^{1/\tau} + (K-1)e^{1/\tau}
= K e^{1/\tau}$, we obtain
\begin{equation}
  p_k \;\geq\; \frac{e^{-1/\tau}}{K}.
  \label{eq:softmax_lb}
\end{equation}
At each exploratory step $S$ entries are drawn; the probability
entry $k$ is \emph{not} selected at a given step is therefore
at most $1 - S p_k \leq 1 - Se^{-1/\tau}/K$.
By Remark~\ref{rem:ema}, treating the
$\lfloor\varepsilon N\rfloor$ exploratory steps as conditionally
independent:
\begin{align}
  \Pr\bigl(n_k(t){=}0\bigr)
  &\;\leq\;
  \left(1 - \frac{S\,e^{-1/\tau}}{K}\right)^{\!\varepsilon N} \notag \\
  &\;\leq\;
  \exp\!\left(-\coverrate\right),
  \label{eq:miss_prob}
\end{align}
where the last step uses $(1-p)^n \leq e^{-np}$.
Hence:
\begin{equation}
  \Pr\bigl(n_k(t) > 0\bigr)
  \;\geq\;
  1 - \exp\!\left(-\coverrate\right).
  \label{eq:activation_prob}
\end{equation}
Define indicators $I_k = \mathbf{1}[n_k(t) > 0]$, so that
$|\mathrm{Active}(t)| = \sum_{k=1}^{K} I_k$.
By linearity of expectation and~\eqref{eq:activation_prob}:
\begin{align}
  \mathbb{E}\bigl[|\mathrm{Active}(t)|\bigr]
  &\;=\;
  \sum_{k=1}^{K} \Pr\bigl(n_k(t) > 0\bigr) \notag \\
  &\;\geq\;
  K\!\left(1 - \exp\!\left(-\coverrate\right)\right),
  \label{eq:final_lb}
\end{align}
establishing~\eqref{eq:lb}.
\end{proof}

\begin{remark}[A structural counting fact under $\varepsilon=0$]
\label{rem:structural_fact}
In the degenerate setting $\varepsilon = 0$, only
encoder-selected entries receive training signal: at each step,
the update Eq.~(9) is applied solely to entries
$\{c_k\}_{k \in z_q}$ chosen by $\mathcal{E}_\theta$, and no
mechanism injects signal into unselected entries. Over $N$
examples the encoder produces at most
$|\mathrm{Im}(\mathcal{E}_\theta)|$ distinct routing outputs,
each of cardinality $S$, so trivially
\begin{equation}
  \mathbb{E}[|\mathrm{Active}(t)|]
  \le
  S\cdot |\mathrm{Im}(\mathcal{E}_\theta)|.
  \label{eq:ub}
\end{equation}
This fact is purely combinatorial and, on its own, does
\emph{not} predict collapse: because $\mathcal{E}_\theta$ is an
LLM conditioned directly on $x$, there is no architectural
reason for $|\mathrm{Im}(\mathcal{E}_\theta)|$ to be small — an
expressive encoder can in principle realise
$|\mathrm{Im}(\mathcal{E}_\theta)|$ as large as $N$, making
bound~\eqref{eq:ub} vacuous ($SN \gg K$). We therefore do not
present~\eqref{eq:ub} as an explanation for the empirically
observed collapse (Figure~4(b)); the actual mechanism is
dynamical, not combinatorial, and is formalised next.
\end{remark}

\begin{proposition}[Reinforcement-driven collapse under $\varepsilon=0$]
\label{prop:collapse}
Consider the $\varepsilon = 0$ regime, in which at each step the
encoder deterministically routes to a subset $z_q \subseteq [K]$
of size $S$, and Eq.~(9) updates only $\{c_k\}_{k \in z_q}$,
sharpening each active entry's content and, by construction of
$\mathcal{E}_\theta$'s selection criteria (which explicitly
conditions on each entry's historical success rate, cf.\ the
Encoder prompt template), increasing the probability that
sharpened entries are reselected on subsequent, semantically
similar inputs. This defines a self-reinforcing selection
process structurally analogous to a generalized P\'olya urn
with $K$ colors and reinforcement confined to the $S$ balls
drawn at each step. Standard results for such
reinforcement processes (e.g.\ Arthur 1989; Pemantle 2007,
\emph{A survey of random processes with reinforcement}) imply
that, absent an exogenous exploration mechanism, the process
converges almost surely to a stationary regime in which a fixed
subset of entries of size $\Theta(S)$ accounts for a $(1-o(1))$
fraction of all future selections, while the remaining
$K - \Theta(S)$ entries are selected only finitely often almost
surely. Consequently
\[
  |\mathrm{Active}(t)| \xrightarrow{\text{a.s.}} \Theta(S)
  \quad \text{as } t \to \infty,
\]
i.e.\ the active set collapses to a size on the order of $S$
rather than $K$, independent of $|\mathrm{Im}(\mathcal{E}_\theta)|$.
This is consistent with, and gives a formal account of, the
entropy collapse observed empirically at $\varepsilon = 0$
(Table~3, Figure~4(b): entropy falls from 3.76 to 1.72 bits).
We state this result at the level of a reinforcement-process
analogy rather than a fully self-contained proof, since the
exact selection kernel induced by an LLM-based encoder is not
analytically tractable; a rigorous convergence proof under the
precise PCO update rule is left to future work.
\end{proposition}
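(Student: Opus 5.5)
The plan is to make the reinforcement-process analogy precise by stating an explicit stochastic model of the $\varepsilon=0$ encoder, and then to obtain the collapse from a monopoly theorem for urns with superlinear reinforcement. First I would abstract $\mathcal{E}_\theta$ as follows. Each entry $k$ carries a reinforcement count $n_k(t)$, the number of times it has been selected and sharpened via Eq.~(9). At each step the encoder draws $S$ distinct indices sequentially without replacement. Each draw picks index $k$ with probability proportional to $f(n_k(t))$ for an increasing weight function $f>0$.

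The key modelling hypothesis, and the only place where properties of the LLM encoder enter, is \emph{superlinearity}: $\sum_{n\ge 1} 1/f(n) < \infty$. An example is $f(n)=e^{\beta n}$, which mirrors the softmax-style dependence of the Encoder template on each entry's historical success rate. I would state this explicitly as an assumption replacing the informal appeal to ``semantic similarity''. It is genuinely needed, because under linear reinforcement $f(n)=n$ the classical P\'olya urn converges to a random Dirichlet limit in which every colour keeps positive frequency, so no collapse occurs.

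Second, for $S=1$ I would use Rubin's exponential embedding, as presented in Davis (1990) and in Pemantle's survey. Attach to each colour $k$ independent exponential clocks with rates $f(0),f(1),\dots$, and let $T_k=\sum_n \mathrm{Exp}(f(n))$ be that colour's explosion time. Superlinearity gives $\mathbb{E}[T_k]=\sum_n 1/f(n)<\infty$, so $T_k<\infty$ almost surely. The $T_k$ are also continuous and independent, hence almost surely distinct. The colour with the smallest $T_k$ is therefore selected all but finitely often, and every other colour is selected only finitely often.

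For general $S$, I would run the same embedding on the sequential without-replacement draws. At each step, once the leading colour is removed the remaining draws form an urn of the same type on $K-1$ colours. Iterating $S$ times yields a random set of $S$ ``monopolist'' colours that fill all slots from some almost-surely finite time $\tau_0$ onward. This is the step I expect to be the main obstacle. Removal couples the clocks across draws, so I would first try a comparison argument: sandwich each slot's process between independent single-draw urns restricted to the colours not yet fixed.

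Third, I would pass to $|\mathrm{Active}(t)|$. This quantity is nondecreasing and bounded by $K$, so it converges almost surely. After $\tau_0$ no new colour is activated, so the limit equals the number of colours touched before $\tau_0$, which is at least $S$. To get the upper bound $O(S)$ rather than merely $\le K$, I would bound the number of distinct colours drawn during the transient. Starting from uniform counts, strong reinforcement makes the first few steps lock in, so the expected number of extra colours is a constant depending on $f$ and $K$. Here I would concede that ``$\Theta(S)$'' is honest only when $K/S$ is treated as fixed, or when $f$ grows fast enough that each slot locks in after $O(1)$ draws. Finally, I would note that entropy collapse follows: the empirical selection distribution concentrates on the $S$ monopolists, which gives entropy at most $\log_2 S$ plus a vanishing term. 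This is consistent with the drop to $1.72$ bits reported in Table~3.
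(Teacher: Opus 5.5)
The paper's own justification is only the P\'olya-urn analogy plus citations, and it explicitly defers a proof. Building an explicit model is therefore a genuinely different route, and two of your remarks expose real holes in the paper's reasoning.

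First, collapse needs superlinear reinforcement: a linear P\'olya urn has a Dirichlet limit with every colour at positive frequency. Second, $\mathrm{Active}(t)$ counts entries with $n_k(t)>0$, so ``selected only finitely often'' does not give $|\mathrm{Active}(t)|\to\Theta(S)$. Your concession there is warranted. For fixed $f$, at the second step the untouched entries carry weight $(K-S)f(0)$ against $Sf(1)$. So the number of entries touched before lock-in grows with $K$, and no $O(S)$ bound uniform in $K$ exists.

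One caution on the modelling step: you cannot motivate superlinearity through the success-rate channel. Success rates are bounded EMA rewards, so a softmax in them has bounded weight ratios; this is the same bounded ratio that drives Theorem~\ref{thm:util}. Every entry's per-step selection probability is then bounded below, and the conditional Borel--Cantelli lemma makes every entry selected infinitely often. You have to posit $f(n)=e^{\beta n}$ as a count-driven sharpening effect rather than read it off the template.

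The genuine gap is the $S\ge 2$ step, where the reduction you state is false. The leader need not be drawn first. Moreover, conditional on the leader $k^\star$ being in the selected set, the remaining $S-1$ indices are not successive sampling from the other $K-1$ weights. For $S=2$, $\Pr(\{k^\star,j\})=\frac{w_{k^\star}w_j}{W}\bigl(\frac{1}{W-w_{k^\star}}+\frac{1}{W-w_j}\bigr)$, which is not proportional to $w_j$.

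Rubin's embedding does not carry over either. Removing a drawn colour until the step ends freezes its clock for a duration that depends on the other colours, so explosion times are no longer independent sums of exponentials. A sandwich against urns ``restricted to the colours not yet fixed'' is also ill-defined, since which colours lock in is not determined at any stopping time. As written, the plan proves only $S=1$, not the $S=4$ regime at issue.

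For $f(n)=e^{\beta n}$ you can avoid embeddings altogether:
\begin{enumerate}
\item Let $A$ be the current top-$S$ set. Given that the first $i-1$ draws lie in $A$, the $i$-th does with probability at least $(S-i+1)/(K-i+1)$. Hence $A$ is selected with probability at least $1/\binom{K}{S}$, and it stays on top afterwards.
\item Suppose in addition $\min_{k\in A}n_k-\max_{k\notin A}n_k\ge d$. While $A$ keeps being selected, the outsiders are frozen and the insiders gain one per step. So the probability that an outsider is ever selected again is at most $S(K-S)e^{-\beta d}/(1-e^{-\beta})$.
\item Choose $d_0$ so that this bound is at most $1/2$. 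Then $\Pr(\text{lock-in}\mid\mathcal{F}_t)\ge\binom{K}{S}^{-d_0}/2$ uniformly in $t$, and L\'evy's 0--1 law gives almost-sure lock-in onto exactly $S$ entries.
\end{enumerate}
For merely summable $1/f$ (e.g.\ quadratic $f$), the required gap grows with the outsiders' counts. The lower bound is then not uniform, and a further idea is needed.
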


\newcommand{\Lsum}{\mathcal{L}_{\mathrm{fid}}
    + \mathcal{L}_{\mathrm{route}}
    + \mathcal{L}_{\mathrm{cb}}
    + \mathcal{L}_{\mathrm{critic}}} 
\subsection{Component-Scoped Credit Assignment: Approximate Additivity}
\label{sec:decomp_theory}
As established in the backward pass,
credit assignment in PCO is \emph{not} an additive partition
of a scalar penalty: the critic $D_\psi$ directly emits
per-variable feedback channels
$(g_\theta, g_\phi, \{g_{c_k}\}_{k\in z_q}, g_\psi)$
as part of its structured verdict (Eq.~(8)),
scoped by \emph{where} in the trace $\mathcal{H}$ the critic
locates a fault. We now establish that despite this
non-additive design, the per-variable updates across all four
trainable components $\Theta = \{\theta, \phi, \mathcal{C}, \psi\}$
converge to a well-defined decomposition in expectation, justifying
why the component-scoped approach is a sound optimisation
strategy for the min--max objective of Eq.~(6).
We define four attributed loss terms as expectations over
$(x,y^\star)\sim\mathcal{D}$, using the scalarizer
$\rho:\mathcal{T}\to\mathbb{R}_{\ge0}$
and the per-variable feedback channels emitted by $D_\psi$:
\begin{align}
  \mathcal{L}_{\mathrm{fid}}
    &\;:=\; \mathbb{E}\!\left[\rho(g_\phi)\right],
    \label{eq:lfid} \\
  \mathcal{L}_{\mathrm{route}}
    &\;:=\; \mathbb{E}\!\left[\rho(g_\theta)\right],
    \label{eq:lroute} \\
  \mathcal{L}_{\mathrm{cb}}
    &\;:=\; \mathbb{E}\!\left[\frac{1}{S}
        \sum_{k\in z_q}\rho(g_{c_k})\right],
    \label{eq:lcb} \\
  \mathcal{L}_{\mathrm{critic}}
    &\;:=\; \mathbb{E}\!\left[\rho(g_\psi)\right],
    \label{eq:lcrit}
\end{align}
where $\mathcal{L}_{\mathrm{fid}}$, $\mathcal{L}_{\mathrm{route}}$,
$\mathcal{L}_{\mathrm{cb}}$, and $\mathcal{L}_{\mathrm{critic}}$
penalise composition, routing, instinct quality, and critic criteria
refinement respectively.
\begin{definition}[Attribution residual]
\label{def:residual}
For a sample $(x, y^\star)$, define the \emph{attributed sum}
across all four trainable variables:
\begin{align}
  \hat{s} \;\coloneqq\;
    \rho(g_\phi)
    + \rho(g_\theta)
    + \frac{1}{S}\!\sum_{k\in z_q} \rho(g_{c_k})
    + \rho(g_\psi),
  \label{eq:shat}
\end{align}
where $g_\phi, g_\theta, \{g_{c_k}\}, g_\psi$ are from
$D_\psi(\mathcal{H})$, and the \emph{additivity residual}
$\delta \coloneqq \rho(\ell_{\mathrm{text}}) - \hat{s}$
measures the gap between the holistic penalty and the
sum of component penalties.
\end{definition}
\begin{theorem}[Approximate additive decomposition]
\label{thm:decomp}
Suppose the critic $D_\psi$ satisfies $|\delta| \leq
\delta_{\max}$ almost surely over $(x,y^\star)\sim\mathcal{D}$.
Then
\begin{align}
  \Bigl|
    \mathbb{E}\!\left[
      \rho\!\left(\ell_{\mathrm{text}}\right)
    \right]
    &-
    \bigl(\Lsum\bigr)
  \Bigr| \nonumber \\
  &\;\leq\; \delta_{\max},
  \label{eq:additive}
\end{align}
where $\mathcal{L}_{\mathrm{fid}}$, $\mathcal{L}_{\mathrm{route}}$,
$\mathcal{L}_{\mathrm{cb}}$, $\mathcal{L}_{\mathrm{critic}}$ are as defined in
Eqs.~\eqref{eq:lfid}--\eqref{eq:lcrit}.
\end{theorem}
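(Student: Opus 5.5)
The plan is a direct linearity-of-expectation argument, essentially Jensen's inequality applied to the absolute value. By Definition~\ref{def:residual}, for every sample $(x,y^\star)$ we have the pointwise identity
\[
\rho(\ell_{\mathrm{text}}) \;=\; \hat{s} + \delta .
\]
Here $\hat{s}$ is the attributed sum of Eq.~\eqref{eq:shat}. Taking expectations over $(x,y^\star)\sim\mathcal{D}$, together with the internal randomness of the forward pass and the critic, gives $\mathbb{E}[\rho(\ell_{\mathrm{text}})] = \mathbb{E}[\hat{s}] + \mathbb{E}[\delta]$. The whole argument reduces to identifying $\mathbb{E}[\hat{s}]$ with $\Lsum$ and then bounding $|\mathbb{E}[\delta]|$.

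\textbf{Step 1: identify $\mathbb{E}[\hat s]$.} Expand $\hat{s}$ into its four summands and apply linearity of expectation term by term:
\[
\mathbb{E}[\rho(g_\phi)] = \mathcal{L}_{\mathrm{fid}}, \qquad
\mathbb{E}[\rho(g_\theta)] = \mathcal{L}_{\mathrm{route}},
\]
\[
\mathbb{E}\Bigl[\tfrac{1}{S}\sum_{k\in z_q}\rho(g_{c_k})\Bigr] = \mathcal{L}_{\mathrm{cb}}, \qquad
\mathbb{E}[\rho(g_\psi)] = \mathcal{L}_{\mathrm{critic}}.
\]
These follow directly from Eqs.~\eqref{eq:lfid}--\eqref{eq:lcrit}. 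For the codebook term, the random index set $z_q$ sits inside the expectation, and the $1/S$ normalisation matches the definition exactly. So no exchange of sum and expectation over a random index set is needed beyond what the definition already encodes.

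\textbf{Step 2: bound the residual.} From Step 1,
\[
\mathbb{E}[\rho(\ell_{\mathrm{text}})] - \bigl(\Lsum\bigr) \;=\; \mathbb{E}[\delta].
\]
Then
\[
|\mathbb{E}[\delta]| \;\le\; \mathbb{E}[|\delta|] \;\le\; \delta_{\max},
\]
where the first inequality is Jensen's inequality for the convex function $|\cdot|$, and the second uses the almost-sure hypothesis $|\delta|\le\delta_{\max}$. This establishes Eq.~\eqref{eq:additive}.

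\textbf{Main obstacle: integrability.} The only real subtlety is that every expectation must be finite, so that linearity is legitimate and we never face $\infty-\infty$. Since $\rho\ge 0$, each $\mathcal{L}$ term is well defined in $[0,\infty]$. The plan is to derive finiteness from the hypothesis as follows:
\begin{itemize}
\item Assume $\mathbb{E}[\rho(\ell_{\mathrm{text}})] < \infty$. This holds, for instance, if $\rho$ is bounded, as is natural for a severity scalarizer.
\item Then $\hat{s} = \rho(\ell_{\mathrm{text}}) - \delta \le \rho(\ell_{\mathrm{text}}) + \delta_{\max}$ almost surely.
\item Because each nonnegative summand of $\hat{s}$ is dominated by $\hat{s}$, each summand has finite expectation.
\end{itemize}
I would state this integrability assumption explicitly (or note that $\rho$ takes values in a bounded range in the implementation) before running Steps 1--2.
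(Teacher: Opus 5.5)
Your proposal is correct and matches the paper's proof: you write $\rho(\ell_{\mathrm{text}}) = \hat{s} + \delta$, use linearity of expectation to identify $\mathbb{E}[\hat{s}]$ with the four attributed losses, and bound $|\mathbb{E}[\delta]| \le \mathbb{E}[|\delta|] \le \delta_{\max}$. Your integrability caveat is a sound refinement that the paper leaves implicit. By the almost-sure bound on $\delta$, finiteness of $\mathbb{E}[\rho(\ell_{\mathrm{text}})]$ is equivalent to integrability of $\hat{s}$, and this is exactly what rules out an $\infty - \infty$ expression.
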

\begin{proof}
By Definition~\ref{def:residual},
$\rho(\ell_{\mathrm{text}}) = \hat{s} + \delta$ with
$|\delta| \leq \delta_{\max}$ a.s.
Taking expectations over $(x,y^\star)\sim\mathcal{D}$
and applying linearity of expectation:
\begin{align}
  \mathbb{E}\!\left[\rho(\ell_{\mathrm{text}})\right]
  &= \mathbb{E}[\hat{s}] + \mathbb{E}[\delta]
  \label{eq:decomp_step1}\\
  &= \Lsum
     + \mathbb{E}[\delta],
  \label{eq:decomp_step2}
\end{align}
where~\eqref{eq:decomp_step2} uses
Eqs.~\eqref{eq:lfid}--\eqref{eq:lcrit} applied to~\eqref{eq:shat}.
Rearranging~\eqref{eq:decomp_step2} and applying Jensen's inequality:
\begin{align}
  \Bigl|
    \mathbb{E}[\rho(\ell_{\mathrm{text}})]
    - (\Lsum)
  \Bigr|
  &= \bigl|\mathbb{E}[\delta]\bigr| \notag \\
  &\leq \mathbb{E}\bigl[|\delta|\bigr]
  \;\leq\; \delta_{\max},
  \label{eq:decomp_step3}
\end{align}
where the first inequality is Jensen's inequality applied to
$|\cdot|$, and the second uses the almost-sure bound on
$|\delta|$. This establishes~\eqref{eq:additive}.
\end{proof}

\section{Implementation Details and Reproducibility}
\label{app:impl}

All experiments are executed on a single NVIDIA A100 80\,GB GPU.
The executor, encoder, generator, critic, and attribution modules all
share a single locally hosted LLaMA-3.1-8B (or Qwen3-8B)
model checkpoint served via a custom local \texttt{FastAPI}\,+\,\texttt{Uvicorn}
inference server; no external or proprietary API calls are made.
Table~\ref{tab:hyperparams} presents the hyperparameters used across
all six benchmarks. At $\tau=0$ (Encoder, Attribution, Executor Eval),
decoding is deterministic (greedy) and top-$p$ has no effect. The
exploration policy combines $\varepsilon$-greedy with softmax-weighted
sampling: $\varepsilon$ governs \emph{whether} the optimiser explores
versus exploits at a given step, while the softmax temperature
$\tau_{\text{soft}}$ governs \emph{which} candidate codebook entries
are sampled during exploration.

\begin{table}[ht]
\centering
\footnotesize
\setlength{\tabcolsep}{6pt}
\renewcommand{\arraystretch}{1.05}
\begin{tabular}{@{}llc@{}}
\toprule
\textbf{Category} & \textbf{Hyperparameter} & \textbf{Value}\\
\midrule
\multirow{3}{*}{Codebook}
  & Size $K$                       & 16 \\
  & Selected per input $S$         & 4  \\
  & EMA step size $\alpha$         & 0.3 \\
\midrule
\multirow{4}{*}{$\varepsilon$-greedy Exploration}
  & Initial $\varepsilon_0$                  & 1.0 \\
  & Decay rate $\gamma$ (per epoch)          & 0.96 \\
  & Minimum $\varepsilon_{\min}$             & 0.15 \\
  & Softmax temperature $\tau_{\text{soft}}$ & 0.5 \\
\midrule
\multirow{7}{*}{Module Generation Settings}
  & Encoder Temp.\ / Top-$p$                       & 0.0 / 1.0 \\
  & Generator Temp.\ / Top-$p$                     & 0.7 / 0.9 \\
  & Executor Temp.\ (Train) / Top-$p$              & 0.6 / 0.95 \\
  & Executor Temp.\ (Eval)                         & 0.0 (Greedy) \\
  & Critic Temp.\ / Top-$p$                        & 0.3 / 1.0 \\
  & Attribution Temp.\ / Top-$p$                   & 0.0 / 1.0 \\
  & $\mathrm{LLM}_{\mathrm{upd}}$ Temp.\ / Top-$p$ & 0.7 / 0.9 \\
\midrule
\multirow{2}{*}{Training}
  & Epochs $T$          & 50 \\
  & Mini-batch size     & 15 \\
\midrule
\multirow{3}{*}{Hardware \& Inference}
  & GPU           & 1$\times$ NVIDIA A100 (80\,GB) \\
  & Quantisation  & 4-bit NF4 + double quant (\texttt{bitsandbytes}) \\
  & API Server    & \texttt{FastAPI}\,+\,\texttt{Uvicorn} (local REST) \\
\bottomrule
\end{tabular}
\caption{Full hyperparameter configuration used across all PCO experiments unless stated otherwise.}
\label{tab:hyperparams}
\end{table} 

\subsection{Datasets and Evaluation Metrics}
\label{app:benchmarks}

Table~\ref{tab:splits} lists the dataset splits, instance counts, and
primary evaluation metrics across all six benchmark tasks.

\begin{table}[ht]
\centering
\footnotesize
\setlength{\tabcolsep}{6pt}
\renewcommand{\arraystretch}{1.05}
\begin{tabular}{@{}lcccc@{}}
\toprule
\textbf{Benchmark} & \textbf{Train} & \textbf{Val} & \textbf{Test}
  & \textbf{Primary Evaluation Metric}\\
\midrule
HotpotQA        & 150 & 300 & 300 & Exact Match (EM) \\
IFBench         & 150 & 300 & 294 & Constraint Satisfaction Rate (CSR) \\
HoVer           & 150 & 300 & 300 & HoVer Score (Retrieval $\wedge$ Verify) \\
PUPA            & 111 & 111 & 221 & Quality Score $-$ PII Leakage Rate \\
AIME-2025       &  45 &  45 &  30 & Accuracy (\%) \\
LiveBench-Math  & 123 & 123 & 122 & Accuracy (\%) \\
\bottomrule
\end{tabular}
\caption{Dataset splits, sample counts, and primary evaluation metrics.}
\label{tab:splits}
\end{table} 
\paragraph{HotpotQA (Multi-Hop Reasoning).}
We evaluate multi-hop reasoning on HotpotQA using an iterative
retrieval pipeline. The feedback signal identifies unresolved
supporting evidence at each reasoning stage, enabling step-wise
retrieval refinement toward the final answer. Performance is
measured via Exact Match (EM), defined as
\begin{equation}
  \text{EM}(y,y^\star)
    = \mathbf{1}\bigl[\operatorname{Normalize}(y)
      = \operatorname{Normalize}(y^\star)\bigr].
  \label{eq:hotpotqa_em}
\end{equation}

\paragraph{IFBench (Instruction Following under Constraints).}
IFBench evaluates generalisation under strict formatting and output
constraints. We optimize a two-stage system consisting of response
generation followed by constraint-aware rewriting. Feedback exposes
both satisfied and violated constraints, allowing the optimizer to
adaptively improve instruction adherence. Performance is measured
via Constraint Satisfaction Rate (CSR),
\begin{equation}
  \text{CSR}(y,\mathcal{C})
    = \frac{1}{|\mathcal{C}|}\sum_{i=1}^{|\mathcal{C}|}
      \mathbf{1}\bigl[y \text{ satisfies } c_i\bigr],
  \label{eq:ifbench_csr}
\end{equation}
where $\mathcal{C} = \{c_1, \dots, c_m\}$ is the set of constraints
specified in the instruction.

\paragraph{HoVer (Multi-Hop Fact Verification).}
HoVer measures multi-hop evidence retrieval and claim verification
over Wikipedia documents. The optimized system performs iterative
query generation and document summarization across multiple hops,
while feedback specifies retrieved gold evidence and remaining
missing documents. An instance is scored correct only when both the
retrieved evidence set is complete \emph{and} the verification label
is correct:
\begin{equation}
  \text{HoVer}(y)
    = \mathbf{1}\!\left[
        \mathcal{D}_{\mathrm{gold}} \subseteq \mathcal{D}_{\mathrm{pred}}
      \right]
      \cdot\,
      \mathbf{1}\!\left[\hat{v} = v^\star\right],
  \label{eq:hover_score}
\end{equation}
where $\hat{v} \in \{\textsc{Supported},\,\textsc{Not-Supported}\}$
is the predicted verification label.

\paragraph{PUPA (Privacy-Preserving Delegation).}
PUPA evaluates privacy-conscious delegation in compound AI systems.
We optimize the PAPILLON pipeline, consisting of trusted rewriting
modules surrounding an untrusted model invocation. The reward jointly
balances task utility and personally identifiable information (PII)
leakage minimization,
\begin{equation}
  \text{Score}_{\mathrm{PUPA}} = U(y) - \lambda \cdot L(y),
  \qquad \lambda = 1.0,
  \label{eq:pupa_score}
\end{equation}
where $U(y) \in [0,1]$ is the LLM-judged quality score and
$L(y) \in [0,1]$ is the fraction of PII tokens remaining in external
calls.

\paragraph{AIME-2025 and LiveBench-Math.}
We evaluate competition-level (AIME-2025) and open-domain
(LiveBench-Math) mathematical problem solving. Models produce
Chain-of-Thought reasoning terminating in a boxed answer, and the
feedback signal inspects intermediate algebraic steps to identify
calculation errors, misapplied theorems, or missing case
decompositions. Performance is measured via exact numerical accuracy,
\begin{equation}
  \text{Acc}(y,y^\star)
    = \mathbf{1}\bigl[\operatorname{ExtractBoxed}(y) = y^\star\bigr].
  \label{eq:math_acc}
\end{equation}
\subsection{System Prompts}
\label{app:prompts}

In Eq.~8, the critic's structured verdict $D_\psi(H)$ is
defined as jointly producing a holistic verdict $\ell_{\text{text}}$
and the full set of component-scoped gradients
$(g_\theta, g_\phi, g_C, g_\psi)$. To realize this reliably in
practice, and to prevent LLM hallucination during structured
extraction, we decompose $D_\psi$ into a two-stage pipeline. The
Critic first evaluates the execution trace $H$ to produce the
holistic verdict $\ell_{\text{text}}$. A fixed \emph{Attribution
Operator} then structurally partitions $\ell_{\text{text}}$ into the
causally disjoint gradients $(g_\theta, g_\phi, g_C)$, while a
separate adversarial inner update yields the critic's own gradient
$g_\psi$. This staged execution enforces strict JSON compliance at
each step while remaining faithful to the unified formulation of
Eq.(8) 

\vspace{0.5em}

\definecolor{titlebargray}{gray}{0.45}
\definecolor{promptbg}{RGB}{234,234,250}
\definecolor{promptborder}{gray}{0.55}
\definecolor{learnablebadge}{RGB}{41,98,171}
\definecolor{fixedbadge}{RGB}{178,58,58}

\renewtcolorbox{promptbox}[1]{
  colback=promptbg, colframe=promptborder,
  coltitle=white, colbacktitle=titlebargray,
  title=#1,
  fonttitle=\small\sffamily\bfseries,
  boxrule=0.6pt, arc=2pt,
  breakable, enhanced,
  left=6pt, right=6pt, top=6pt, bottom=6pt,
  titlerule=0pt,
  before upper={\parindent0pt\raggedright},
}

\renewcommand{\learnabletag}{\hfill\textcolor{white}{\colorbox{learnablebadge}{\scriptsize\textsf{LEARNABLE}}}}
\renewcommand{\fixedtag}{\hfill\textcolor{white}{\colorbox{fixedbadge}{\scriptsize\textsf{FIXED}}}}

\begin{promptbox}{Encoder ($\mathcal{E}_\theta$) Routing Prompt \learnabletag}
\scriptsize\ttfamily\raggedright

\textcolor{gray}{[System Directive --- Learnable Variable]}\\
You are a specialized routing module for a prompt optimization system.
You will be given a task input and a numbered codebook of K instinct
descriptors. Your job is to select exactly S instinct indices that
are most relevant to correctly answering the given input.

\vspace{0.4em}
\textcolor{gray}{[Selection Criteria --- Learnable Variable]}\\
When selecting instincts, prioritize:\\
1. Direct relevance to the task type.\\
2. Complementary strategies that work well together.\\
3. Strategies that address common failure modes.\\
4. Balance between reasoning depth and efficiency.

\vspace{0.4em}
\textcolor{gray}{[User-Turn Execution Template]}\\
Task: \{task\}

Instructions:\\
1. Identify the core constraints or requirements in the task.\\
2. Review the Instincts below and their historical success rates.\\
3. Select \{S\} instincts that most directly address the constraints.

Available Instincts (index: text [success rate]):\\
\{codebook\_entries\}

Output ONLY a JSON object in this exact schema:

\vspace{0.3em}
\hangindent=1em\hangafter=1
\{\\
\hspace*{1em}"constraints": ["identified constraint 1", \\
\hspace*{1em}\hspace*{1em}"identified constraint 2"],\\
\hspace*{1em}"selected\_indices": [integer indices],\\
\hspace*{1em}"analysis": "Step-by-step justification for\\
\hspace*{1em}\hspace*{1em}these selections."\\
\}
\end{promptbox}

\vspace{0.5em}

\begin{promptbox}{Generator ($\mathcal{G}_\phi$) Composition Prompt \learnabletag}
\scriptsize\ttfamily\raggedright

\textcolor{gray}{[System Directive --- Learnable Variable]}\\
You are an expert prompt composition module. You will be given a task
input and a small set of selected strategies. Your objective is to
compose these strategies into a single, fluent, and highly effective
task prompt.

\vspace{0.4em}
\textcolor{gray}{[Style Guide --- Learnable Variable]}\\
Style Guidelines:\\
- Use clear, direct language.\\
- Be specific about expected reasoning process.\\
- Include relevant constraints and requirements.\\
- Maintain a helpful, professional tone.

\vspace{0.4em}
\textcolor{gray}{[User-Turn Execution Template]}\\
Task: \{task\}\\
Selected Strategies:\\
\{instincts\}

Compose a concise system prompt (2--4 sentences) that:\\
(1) Uses the strategies above for INTERNAL reasoning only.\\
(2) ALWAYS outputs just the concise final answer, NOT the reasoning
process.\\
(3) Emphasizes extracting the specific answer from context.\\
(4) Tells the model to answer directly without preamble.

Output ONLY the composed prompt text:
\end{promptbox}

\vspace{0.5em}

\begin{promptbox}{Critic ($D_\psi$) Evaluation Prompt \learnabletag}
\scriptsize\ttfamily\raggedright

\textcolor{gray}{[System Directive --- Learnable Variable]}\\
You are a structured critic for a prompt optimization system.
You will be given the task input, the composed prompt, the executor
response, and the reference answer. Emit a structured natural-language
verdict identifying ALL behavioral failures in the response.

\vspace{0.4em}
\textcolor{gray}{[Evaluation Criteria --- Learnable Variable]}\\
Evaluation Priorities:\\
1. Task Completion: Does the prompt lead to correct task completion?\\
2. Clarity: Is the prompt clear and unambiguous?\\
3. Efficiency: Is the prompt concise without losing effectiveness?\\
4. Robustness: Would the prompt work for variations of the task?\\
5. Reasoning Quality: Does the prompt elicit good reasoning?

\vspace{0.4em}
\textcolor{gray}{[User-Turn Execution Template]}\\
Task: \{task\}\\
Prompt: \{prompt\}\\
Response: \{response\}\\
\{ground\_truth\_section\}

Rate this prompt's effectiveness. Output ONLY this JSON schema:

\vspace{0.3em}
\hangindent=1em\hangafter=1
\{\\
\hspace*{1em}"score": [float between 0.0 and 1.0],\\
\hspace*{1em}"feedback": "Detailed, step-by-step breakdown of\\
\hspace*{1em}\hspace*{1em}failure modes and suggested corrections."\\
\}
\end{promptbox}

\vspace{0.5em}

\begin{promptbox}{Attribution Operator Partitioning Prompt \fixedtag}
\scriptsize\ttfamily\raggedright

\textcolor{gray}{[System Directive --- Fixed, not optimized]}\\
You are a credit assignment module (Attribution Operator).
You will be given the critic's holistic verdict (textual loss
$\ell_{\text{text}}$) on an execution trace. Your objective is to
produce granular, per-variable textual gradients
by partitioning the critic's feedback
$\ell_{\text{text}}$.

\vspace{0.4em}
\textcolor{gray}{[User-Turn Execution Template]}\\
Critic's Verdict ($\ell_{\text{text}}$) on the execution trace:\\
"\{critic\_feedback\}"

Analyze this verdict and partition the errors into three
structurally disjoint modes. If a mode is not responsible for any
error, leave its field as an empty string.\\
1. "rendering\_errors": Flaws in how the prompt is phrased, style,
clarity, or formatting (Produces gradient $g_\phi$ for the
Generator).\\
2. "instinct\_errors": Flaws in the underlying strategies/instincts
themselves (Produces gradient $g_C$ for the Codebook).\\
3. "routing\_errors": Errors in selecting the right strategies for
the task, missed constraints, or irrelevant strategies (Produces
gradient $g_\theta$ for the Encoder).

Output ONLY this JSON schema:

\vspace{0.3em}
\hangindent=1em\hangafter=1
\{\\
\hspace*{1em}"rendering\_errors": "...",\\
\hspace*{1em}"instinct\_errors": "...",\\
\hspace*{1em}"routing\_errors": "..."\\
\}
\end{promptbox}

\section{Discussion: Instance-Aware Baselines}
\label{app:baselines}

\begin{table*}[t]
\centering
\caption{\textbf{Comparison of prompt optimization methods.}}
\label{tab:related_work}
\setlength{\tabcolsep}{4pt}
\renewcommand{\arraystretch}{1.0}
\footnotesize
\begin{tabular}{@{}lccccc@{}}
\toprule
\multirow{2}{*}{\textbf{Method}} &
  \textbf{Per-instance} &
  \textbf{Discrete} &
  \textbf{Component-wise} &
  \textbf{Natural-language} &
  \textbf{Jointly trained} \\
 &
  \textbf{Adaptive} &
  \textbf{Codebook} &
  \textbf{Credit Assignment} &
  \textbf{Parameters} &
  \textbf{Critic} \\
\midrule
\multicolumn{6}{@{}l}{\textit{Monolithic Search / Discrete}} \\[1pt]
AutoPrompt   & \xmark & \xmark & \xmark & \xmark & \xmark \\
APE          & \xmark & \xmark & \xmark & \cmark & \xmark \\
\midrule
\multicolumn{6}{@{}l}{\textit{Textual Gradient / Reflective}} \\[1pt]
ProTeGi      & \xmark & \xmark & \xmark & \cmark & \xmark \\
Reflexion    & \xmark & \xmark & \xmark & \cmark & \xmark \\
Self-Refine  & \xmark & \xmark & \xmark & \cmark & \xmark \\
TextGrad     & \xmark & \xmark & \xmark & \cmark & \xmark \\
\midrule
\multicolumn{6}{@{}l}{\textit{Evolutionary / RL}} \\[1pt]
EvoPrompt    & \xmark & \xmark & \xmark & \cmark & \xmark \\
GEPA         & \xmark & \xmark & \xmark & \cmark & \xmark \\
RLPrompt     & \xmark & \xmark & \xmark & \xmark & \xmark \\
\midrule
\multicolumn{6}{@{}l}{\textit{Compositional Pipeline}} \\[1pt]
DSPy         & \xmark & \xmark & \textbf{P} & \cmark & \xmark \\
MIPROv2      & \xmark & \xmark & \textbf{P} & \cmark & \xmark \\
\midrule
\multicolumn{6}{@{}l}{\textit{Instance-level / Modular}} \\[1pt]
L2P          & \cmark & \xmark & \xmark & \xmark & \xmark \\
MoPE         & \cmark & \xmark & \xmark & \xmark & \xmark \\
QPO          & \cmark & \xmark & \xmark & \cmark & \xmark \\
TRPrompt     & \cmark & \xmark & \xmark & \cmark & \xmark \\
UniPrompt    & \xmark & \xmark & \textbf{P} & \cmark & \xmark \\
\midrule
\multicolumn{6}{@{}l}{\textit{Adversarial / Min-Max}} \\[1pt]
AdvICL       & \xmark & \xmark & \xmark & \cmark & \cmark \\
AdvCoT       & \xmark & \xmark & \xmark & \cmark & \cmark \\
\midrule
\rowcolor{blue!5}
\textbf{PCO (Ours)} & \cmark & \cmark & \cmark & \cmark & \cmark \\
\bottomrule
\end{tabular}
\begin{flushleft}
\scriptsize
\setlength{\parskip}{2pt}
\textbf{P} = partial component-wise credit assignment: methods that optimize per-stage prompts
via pipeline decomposition but lack (i)~a learned discrete routing operator that dynamically
selects strategies per input instance, and (ii)~causally isolated per-component textual
gradients (i.e., distinct $g_\theta$, $g_\phi$, $g_\mathcal{C}$ derived from a structured
Attribution Operator).
\textbf{Note:} RLPrompt optimizes discrete tokens but produces non-semantic,
ungrammatical trigger sequences, and is therefore not
classified as having natural-language parameters.
\end{flushleft}
\end{table*}

Reflexion and Self-Refine are \emph{inference-time search methods}: they issue multiple LLM calls \emph{at deployment} per query to iteratively refine a single response. PCO is an \emph{offline training framework}: it optimizes a codebook $\mathcal{C}^\star$ during training, requiring only fixed-cost forward pass ($\sim$45s per query) at inference. MIPROv2 is included as a primary baseline in Table~1 of the main paper ; PCO outperforms MIPROv2 across all benchmarks while reducing deployed prompt length by up to $14.1\times$. Table~\ref{tab:related_work} summarizes these comparisons across five structural axes.

\section{Additional Experiments and Analysis}
\vspace{0.5em}

\subsection{Attribution Operator Evaluation}
\label{app:attribution_eval}
To evaluate the precision and recall of the critic's internal attribution operator, we adopt a causal intervention methodology using questions from the HotpotQA dataset. We construct a benchmark of 150 synthetic execution traces, where each trace $H = (x, z_q, \{c_k\}_{k \in z_q}, p, y, y^\star)$ captures the full forward pass. By deliberately perturbing exactly one component while holding all others constant, we isolate three failure modes with known ground-truth causality, summarized in Table~\ref{tab:failure_modes}. Crucially, all un-perturbed baseline traces are empirically verified to yield the correct answer $y^\star$ when executed by the target model $\mathcal{M}$ prior to perturbation. This guarantees that the induced failure is strictly caused by the targeted intervention. An attribution is scored as correct if the operator assigns its primary diagnostic feedback to the ground-truth component responsible for the induced failure.
\begin{table}[ht]
\centering
\footnotesize
\setlength{\tabcolsep}{6pt}
\renewcommand{\arraystretch}{1.1}
\begin{tabular}{@{}lp{4.6cm}c@{}}
\toprule
\textbf{Failure Mode} & \textbf{Intervention} & \textbf{G.T.\ Locus} \\
\midrule
Routing (50)     & Encoder $z_q$ forced to select a demonstrably irrelevant strategy. & Encoder $\mathcal{E}_\theta$ \\
Composition (50) & Generated prompt $p$ deliberately omits a key selected strategy.  & Generator $\mathcal{G}_\phi$ \\
Instinct (50)    & An actively harmful strategy is forced into the active set $c_k$. & Codebook $\mathcal{C}$ \\
\bottomrule
\end{tabular}
\caption{Causal intervention design: three failure modes with known ground-truth (G.T.) attribution locus, 50 synthetic traces each.}
\label{tab:failure_modes}
\end{table} 
\begin{table}[ht]
\centering
\small
\setlength{\tabcolsep}{10pt}
\renewcommand{\arraystretch}{1.05}
\begin{tabular}{@{}lccc@{}}
\toprule
\textbf{Failure Category} & \textbf{Precision} & \textbf{Recall} & \textbf{F1-Score} \\
\midrule
Routing (Encoder $\mathcal{E}_\theta$)     & 0.83 & 0.90 & 0.86 \\
Composition (Generator $\mathcal{G}_\phi$) & 0.80 & 0.80 & 0.80 \\
Instinct Quality ($c_k \in \mathcal{C}$)   & 0.90 & 0.80 & 0.85 \\
\midrule
\textbf{Macro Average} & \textbf{0.84} & \textbf{0.83} & \textbf{0.84} \\
\bottomrule
\end{tabular}
\caption{Attribution operator performance on 150 synthetic ground-truth failure cases via causal intervention.}
\label{tab:attribution_eval}
\end{table} 
\section{Hyperparameter Sensitivity Analysis}
\label{app:hyperparameters}

Figure~\ref{fig:hp_sensitivity_full} provides a granular sensitivity ablation study for each key hyperparameter on the HotpotQA benchmark using LLaMA-3.1-8B. 

\begin{figure*}[ht]
\centering

\definecolor{plotblue}{RGB}{31,89,183}
\definecolor{peakshade}{RGB}{225,228,245}

\pgfplotsset{
  hpsweep/.style={
    width=0.235\textwidth,
    height=3.6cm,
    grid=none,
    axis lines=box,
    axis line style={black},
    tick align=outside,
    tick style={color=black, thin},
    title style={font=\small, yshift=-2pt},
    label style={font=\scriptsize},
    tick label style={font=\tiny},
    xtick=data,
    enlarge x limits=0.1,
    ymin=35, ymax=53,
    mark size=1.4pt,
    line width=0.7pt,
  }
}

\begin{tikzpicture}
\begin{axis}[
  hpsweep,
  title={EMA Step Size ($\alpha$)},
  xlabel={$\alpha$}, ylabel={Score},
  symbolic x coords={0.0,0.1,0.3,0.5,0.7,0.9},
]
\draw[fill=peakshade, draw=none] (axis cs:0.1,35) rectangle (axis cs:0.5,53);
\addplot[plotblue, mark=*, mark options={fill=plotblue}]
  coordinates {(0.0,44.35) (0.1,48.50) (0.3,51.66) (0.5,50.20) (0.7,48.10) (0.9,46.50)};
\addplot[only marks, mark=star, mark size=3.5pt, plotblue]
  coordinates {(0.3,51.66)};
\end{axis}
\end{tikzpicture}%
\hfill
\begin{tikzpicture}
\begin{axis}[
  hpsweep,
  title={Critic Temp.\ ($T$)},
  xlabel={$T$}, ylabel={Score},
  symbolic x coords={0.0,0.1,0.3,0.5,0.7,0.9},
]
\draw[fill=peakshade, draw=none] (axis cs:0.0,35) rectangle (axis cs:0.1,53);
\addplot[plotblue, mark=*, mark options={fill=plotblue}]
  coordinates {(0.0,51.66) (0.1,51.20) (0.3,49.50) (0.5,45.30) (0.7,42.14) (0.9,38.40)};
\addplot[only marks, mark=star, mark size=3.5pt, plotblue]
  coordinates {(0.0,51.66)};
\end{axis}
\end{tikzpicture}%
\hfill
\begin{tikzpicture}
\begin{axis}[
  hpsweep,
  title={Encoder Temp.\ ($T$)},
  xlabel={$T$}, ylabel={Score},
  symbolic x coords={0.0,0.1,0.3,0.5,0.7,0.9},
]
\draw[fill=peakshade, draw=none] (axis cs:0.0,35) rectangle (axis cs:0.1,53);
\addplot[plotblue, mark=*, mark options={fill=plotblue}]
  coordinates {(0.0,51.66) (0.1,51.50) (0.3,50.10) (0.5,47.88) (0.7,46.20) (0.9,44.50)};
\addplot[only marks, mark=star, mark size=3.5pt, plotblue]
  coordinates {(0.0,51.66)};
\end{axis}
\end{tikzpicture}%
\hfill
\begin{tikzpicture}
\begin{axis}[
  hpsweep,
  title={Generator Temp.\ ($T$)},
  xlabel={$T$}, ylabel={Score},
  symbolic x coords={0.0,0.2,0.4,0.6,0.8,1.0},
]
\draw[fill=peakshade, draw=none] (axis cs:0.4,35) rectangle (axis cs:0.8,53);
\addplot[plotblue, mark=*, mark options={fill=plotblue}]
  coordinates {(0.0,49.02) (0.2,49.80) (0.4,50.90) (0.6,51.66) (0.8,50.10) (1.0,47.20)};
\addplot[only marks, mark=star, mark size=3.5pt, plotblue]
  coordinates {(0.6,51.66)};
\end{axis}
\end{tikzpicture}

\caption{Hyperparameter sensitivity analysis on HotpotQA (LLaMA-3.1-8B). Shaded bands and $\star$ markers denote the peak configuration for each hyperparameter.}
\label{fig:hp_sensitivity_full}
\end{figure*}

\subsection{Optimizer Momentum ($\alpha$)}
We implement momentum via an Exponential Moving Average (EMA) to smooth the textual updates. As shown in Figure~\ref{fig:hp_sensitivity_full}, setting $\alpha = 0.3$ provides the optimal balance of historical memory. Without momentum ($\alpha=0.0$), the optimization stalls. If $\alpha \ge 0.7$, the optimizer becomes overly reactive to single noisy traces, causing catastrophic oscillation and performance degradation.

\subsection{Critic Temperature ($\mathcal{D}_\psi$)}
The most critical requirement for PCO is evaluator determinism. Any sampling randomness during the backward pass corrupts the structural credit assignment process. Figure~\ref{fig:hp_sensitivity_full} demonstrates a steep drop-off: raising $T > 0.3$ induces hallucinated attribution and breaks JSON parsing stability, leading to a severe failure of the optimization loop.

\subsection{Encoder Temperature ($\mathcal{E}_\theta$)}
The encoder performs discrete latent routing by selecting indices $z_q$. This requires stable, \texttt{argmax}-equivalent behavior. As illustrated by the steady decline in Figure~\ref{fig:hp_sensitivity_full}, raising the encoder temperature causes stochastic routing, which destroys the trajectory momentum established by the optimizer.

\subsection{Generator Temperature ($\mathcal{G}_\phi$)}
Unlike the deterministic routing and evaluation modules, the generator requires natural-language semantic diversity to fluently compose prompts. Figure~\ref{fig:hp_sensitivity_full} shows a clear inverted U-shape: at $T=0.0$, the generator yields rigid phrasing that fails to smoothly integrate complex instincts. Conversely, at $T \ge 0.8$, the generator begins to hallucinate and omit hard constraints selected by the encoder. A temperature of 0.6 achieves the optimal balance.


\subsection{Impact of Codebook Initialization}
\label{app:initialization}

Table~\ref{tab:init_comparison} compares expert-seeded and random codebook
initialization on IFBench (LLaMA-3.1-8B). Expert-seeded initialization 
outperforms random initialization across both metrics (38.33\% vs.\
34.18\% Test Acc; 37.67\% vs.\ 36.73\% CSR). We attribute this to
expert-seeded codebooks beginning closer to human-designed solutions,
providing a stronger optimization starting point that textual-gradient
descent can refine more effectively within the fixed training budget.
Random initialization, while achieving lower final performance, retains
the practical advantage of not requiring hand-authored seed instincts,
which may be preferable when domain expertise for codebook design is
unavailable.

\begin{table}[ht]
\centering
\small
\renewcommand{\arraystretch}{1.15}
\begin{tabular}{lcc}
\toprule
\textbf{Initialization} & \textbf{Test Acc (\%)} & \textbf{CSR (\%)} \\
\midrule
Expert-Seeded        & 38.33 & 37.67 \\
Random & 34.18 & 36.73 \\
\bottomrule \\
\end{tabular}

\caption{Initialization strategy comparison on IFBench (LLaMA-3.1-8B,
50 epochs, $N{=}150$).}
\label{tab:init_comparison}
\end{table}

\begin{table}[ht]
\centering
\footnotesize
\renewcommand{\arraystretch}{1.15}
\setlength{\tabcolsep}{5pt}
\begin{tabular}{@{} c >{\raggedright\arraybackslash}p{5.8cm} 
                    >{\raggedright\arraybackslash}p{5.8cm} @{}}
\toprule
\textbf{ID} & \textbf{Expert-Seeded (initial content)} & 
              \textbf{Random-Seeded (initial content)} \\
\midrule
\#0 & Constraint verification: verify project scope,
      stakeholder expectations, and alignment before
      finalising. &
      A strategic framework for refining user-centric
      language, fostering clear and concise interactions
      that prioritise nuance. \\[4pt]
\#1 & Strategic formatting: employ hierarchical structure
      with numbered lists and flowcharts to illustrate
      complex concepts. &
      A structured methodology for iterating and embracing
      diverse perspectives to ensure effective, resilient
      solutions. \\[4pt]
\#2 & When a specific word count is requested, prioritise
      clarity and concise storytelling. &
      Amplify creative potential by embracing diverse
      perspectives and promoting innovative solutions
      through calculated risks. \\[4pt]
\#3 & Craft prompts that amplify diverse voices and
      perspectives, prioritising empathy and constructive
      dialogue. &
      Employ a multi-faceted approach incorporating user
      feedback, context analysis, and personalised
      responses. \\
\bottomrule
\end{tabular}
\caption{Qualitative comparison of seed instincts at
Epoch~1 under expert-seeded vs.\ random initialization on
IFBench (LLaMA-3.1-8B). These are the raw starting values
\emph{before} textual-gradient optimization.}
\label{tab:init_seeds_epoch1}
\end{table} 
\subsection{Encoder Routing Dynamics and Codebook Size}
\label{app:routing_dynamics}
 
Figure~\ref{fig:encoder_evolution} shows routing-probability evolution
across training. The adaptive encoder (Fig.~\ref{fig:adaptive}) increasingly
concentrates routing on high-performing instincts while preserving
exploration via $\varepsilon$-greedy sampling; static routing
(Fig.~\ref{fig:static}) shows no such adaptation --- confirming learnable
routing is necessary for task-specialized instinct discovery. This matches
the routing-entropy gap in Fig.~4b of the main text ($4.76$ vs.\ $2.72$
bits, with/without $\varepsilon$-greedy). We set $K{=}32$, $S{=}4$: $\binom{32}{4} = 35{,}960$ composition paths
balance expressiveness against a tight discrete bottleneck, avoiding mode
collapse ($94.5\%$ utilization, $4.81/5.0$ bits entropy;
Table~\ref{tab:routing_diversity}).

\begin{figure}[ht]
  \centering
  \begin{subfigure}[b]{0.47\linewidth}
    \includegraphics[width=\linewidth, trim=0 0 15 0, clip]{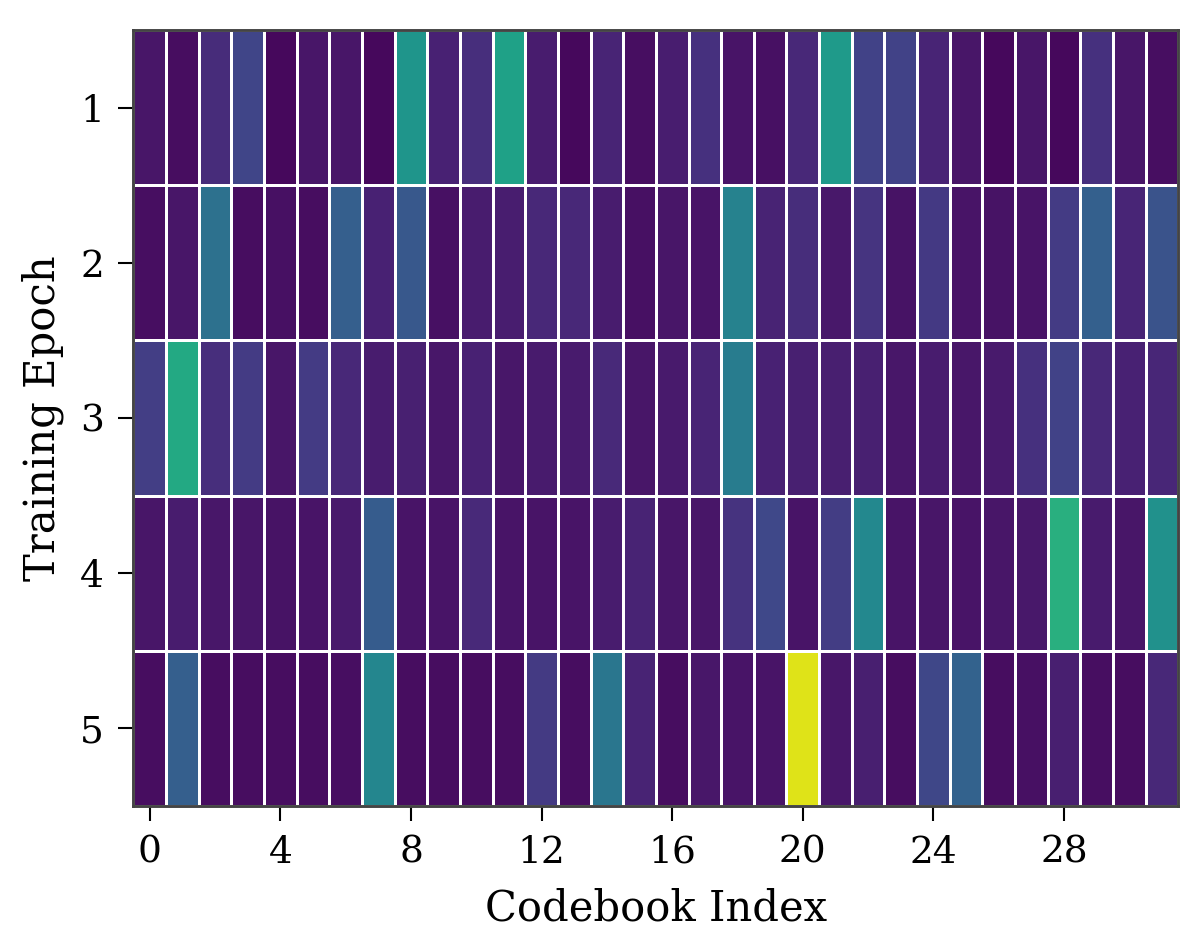} 
    \caption{Adaptive (PCO).}
    \label{fig:adaptive}
  \end{subfigure}
  \hfill
  \begin{subfigure}[b]{0.46\linewidth}
    \includegraphics[width=\linewidth, trim=0 0 15 0, clip]{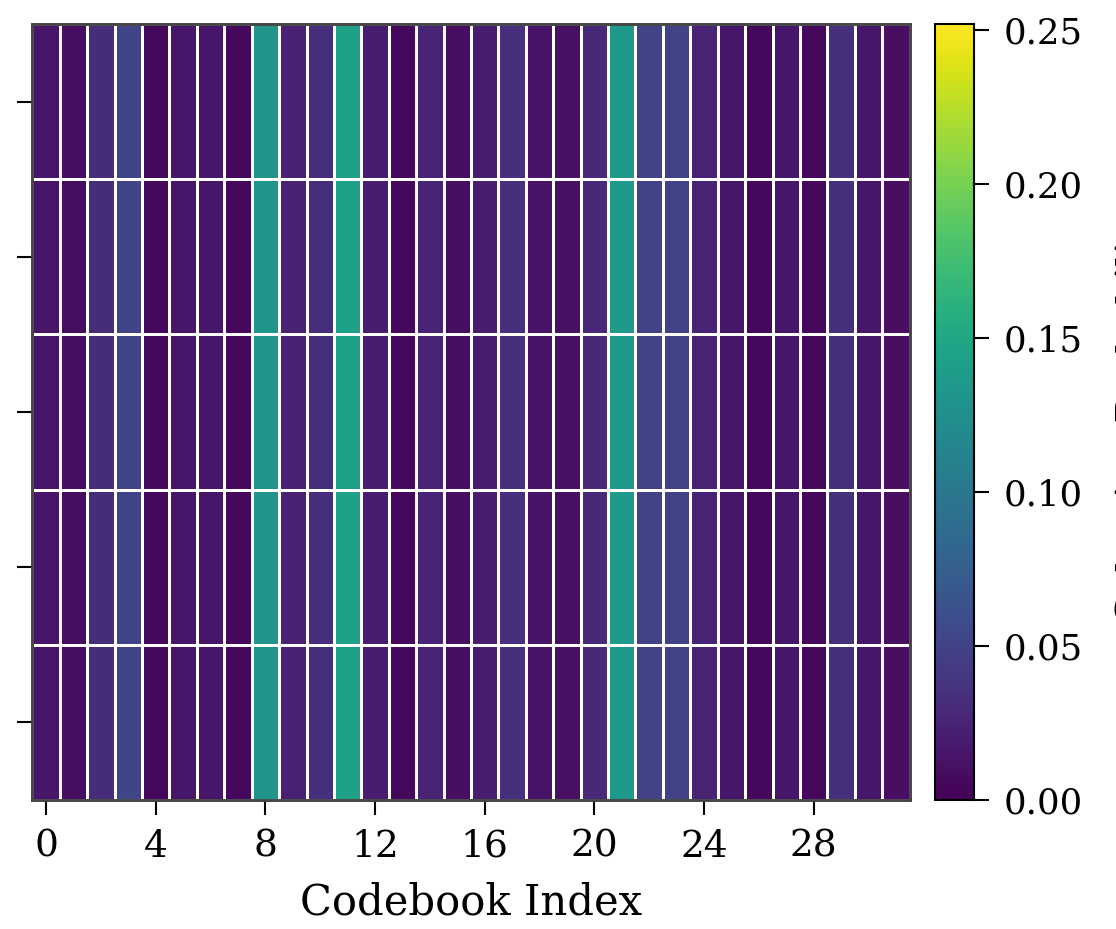}
    \caption{Static encoder.}
    \label{fig:static}
  \end{subfigure}
  \caption{Codebook selection probability heatmaps across training epochs (IFBench, LLaMA-3.1-8B, $K=32$).}
  \label{fig:encoder_evolution}
\end{figure}

\begin{table}[ht]
\centering
\small
\setlength{\tabcolsep}{5pt}
\renewcommand{\arraystretch}{1}
\begin{tabular}{@{} l c c c c @{}}
\toprule
\textbf{Benchmark} & \textbf{Entropy} & \textbf{Util.} & \textbf{Unique} & \textbf{Avg SR} \\
\midrule
HotpotQA \textnormal{\scriptsize(Qwen3)} & 4.78 & 93.8\% & 30/32 & 0.567 \\
HotpotQA \textnormal{\scriptsize(LLaMA)} & 4.71 & 87.5\% & 28/32 & 0.544 \\
IFBench  \textnormal{\scriptsize(Qwen3)} & 4.85 & 100.0\% & 32/32 & 0.612 \\
IFBench  \textnormal{\scriptsize(LLaMA)} & 4.76 & 87.5\% & 28/32 & 0.531 \\
HoVer    \textnormal{\scriptsize(Qwen3)} & 4.82 & 93.8\% & 30/32 & 0.598 \\
PUPA     \textnormal{\scriptsize(Qwen3)} & 4.91 & 100.0\% & 32/32 & 0.643 \\
AIME-25  \textnormal{\scriptsize(Qwen3)} & 4.79 & 93.8\% & 30/32 & 0.521 \\
LB-Math  \textnormal{\scriptsize(Qwen3)} & 4.84 & 100.0\% & 32/32 & 0.574 \\
\midrule
\textbf{Average} & \textbf{4.81} & \textbf{94.5\%} & \textbf{30.3/32} & \textbf{0.574} \\
\bottomrule
\end{tabular}
\caption{Routing diversity and codebook utilisation statistics ($K=32$). \emph{Entropy}: Shannon entropy ($\max = \log_2(32) = 5.0$ bits). \emph{Util.}: fraction selected $\ge 1$ time (Unique/32). \emph{Avg SR}: selection-weighted success rate.}
\label{tab:routing_diversity}
\end{table}

\newcolumntype{L}{>{\raggedright\arraybackslash}p{0.50\linewidth}}

%

\definecolor{pcoBg}{RGB}{248,250,253}  
\definecolor{pcoMath}{RGB}{31,86,161}   
\definecolor{pcoIF}{RGB}{163,45,54}     
\definecolor{pcoHoVer}{RGB}{40,120,90}  
\definecolor{pcoPUPA}{RGB}{110,60,150}  

\newtcolorbox{promptexample}[2]{
  enhanced,
  breakable,
  colback=pcoBg,
  colframe=#2,
  boxrule=0.6pt,
  arc=3pt,
  left=10pt, right=10pt, top=10pt, bottom=10pt, 
  before skip=12pt, after skip=12pt,
  fonttitle=\bfseries\sffamily\small,
  coltitle=white,
  colbacktitle=#2,
  attach boxed title to top left={xshift=10pt, yshift=-2mm},
  boxed title style={colback=#2, arc=2pt, boxrule=0pt, top=2pt, bottom=2pt, left=5pt, right=5pt},
  title={#1}
}

\newtcolorbox{genprompt}{
  enhanced,
  breakable,
  colback=white,
  colframe=black!15, 
  boxrule=0.5pt,
  arc=2pt,
  left=8pt, right=8pt, top=6pt, bottom=6pt,
  before skip=6pt, after skip=0pt
}

\subsection{Qualitative Analysis of Deployed Prompts}
\label{app:qualitative_prompts}

A central property of PCO is the discrete codebook bottleneck, which induces
emergent domain-specialised prompting behaviours without task-specific
supervision. The examples below illustrate deployed system prompts generated
at inference time ($K{=}16$, $S{=}4$) for representative inputs across four
benchmarks. Across domains, the encoder consistently surfaces structured
reasoning patterns appropriate to the underlying task, including invariant
verification for mathematics, constraint preservation for
instruction-following, multi-hop evidence tracking for retrieval, and
privacy-aware rewriting for sensitive inputs.


%
%

\newcommand{\cbrow}[4]{%
  \par\nopagebreak[4]\noindent\textbf{#1}~~\textbf{#2}\dotfill\textcolor{black!45}{\footnotesize\#{#3}}\\[1pt]
  \nopagebreak[4]\hspace*{1.6em}{\footnotesize\itshape\textcolor{black!60}{#4}}\par\vspace{5pt}%
}

\subsubsection{LiveBench-Math}
For mathematical reasoning, the encoder prioritises verification, invariance
tracking, and edge-case analysis, producing prompts that encourage
self-auditing reasoning behaviour.

\begin{promptexample}{LiveBench-Math}{pcoMath}
\textbf{Input Task.}~\textit{``Find the number of real roots of
$P(x) = x^{4} - 4x^{3} + 12x^{2} + x - 1$.''}

\vspace{6pt}
\textbf{Encoder Selection} ($K{=}16$, $S{=}4$) \\

\vspace{2pt}
\cbrow{$z_1$}{Double-Entry Verification}{2}{independent intermediate step auditing}
\cbrow{$z_2$}{Identify Invariants}{5}{polynomial root \& sign structure analysis}
\cbrow{$z_3$}{Edge-Case Analysis}{11}{boundary condition \& asymptotic checking}
\cbrow{$z_4$}{Clarity Over Assumption}{14}{explicit step-wise derivation protocol}

\vspace{4pt}
\textbf{\footnotesize GENERATOR-COMPOSED PROMPT}
\begin{genprompt}
\itshape\small
Solve the problem through explicit step-wise reasoning. Identify invariant
properties, independently verify intermediate calculations, and analyse
edge cases before producing the final answer.
\end{genprompt}
\end{promptexample}

\subsubsection{IFBench}
For strict instruction-following, the encoder injects formatting and
constraint-preservation instincts directly into the deployed prompt.
Unlike reasoning tasks, verbatim insertion preserves constraint fidelity
without generator-side compression.

\begin{promptexample}{IFBench}{pcoIF}
\textbf{Input Task.}~\textit{``Is it plausible that frequent hardship can
make a society more resilient? Include exactly 2 numbers in the response.''}

\vspace{6pt}
\textbf{Encoder Selection} ($K{=}16$, $S{=}4$) \\

\vspace{2pt}
\cbrow{$z_1$}{Multi-Step Verify}{7}{structured verification \& triple-check protocol}
\cbrow{$z_2$}{Narrative Analysis}{9}{dynamic narrative framework \& real-time analysis}
\cbrow{$z_3$}{Expectation Align}{15}{proactive alignment \& assumption clarification}
\cbrow{$z_4$}{Meticulous Sweeps}{4}{meticulous prohibited keyword \& letter sweeps}

\vspace{4pt}
\textbf{\footnotesize GENERATOR-COMPOSED PROMPT}
\begin{genprompt}
\itshape\small
You are an instruction following assistant. Construct a dynamic narrative
framework clarifying assumptions about societal resilience. Execute a
meticulous sweep of the output and use a triple check protocol to ensure
exactly two numbers are included.
\end{genprompt}
\end{promptexample}

\subsubsection{HoVer}
For multi-hop fact verification, the encoder surfaces retrieval strategies
centred on evidence chaining, entity disambiguation, and temporal
consistency across documents.

\begin{promptexample}{HoVer}{pcoHoVer}
\textbf{Input Task.}~\textit{``The director of \textit{Mulholland Drive}
was born in the same state as the lead actress of \textit{Blue Velvet}.''}

\vspace{6pt}
\textbf{Encoder Selection} ($K{=}16$, $S{=}4$) \\

\vspace{2pt}
\cbrow{$z_1$}{Multi-Hop Chain Construction}{1}{sequential evidence trajectory tracking}
\cbrow{$z_2$}{Temporal Evidence Ordering}{6}{chronological fact verification}
\cbrow{$z_3$}{Entity Disambiguation}{12}{multi-entity resolution \& mapping}
\cbrow{$z_4$}{Contradiction Flagging}{18}{cross-document conflict detection}

\vspace{4pt}
\textbf{\footnotesize GENERATOR-COMPOSED PROMPT}
\begin{genprompt}
\itshape\small
Construct an explicit chain of supporting evidence. Resolve entity
references before advancing retrieval hops, maintain temporal consistency
across evidence, and flag contradictory passages before producing the
final verification decision.
\end{genprompt}
\end{promptexample}

\subsubsection{PUPA}
For privacy-preserving delegation, the encoder independently discovers
privacy-oriented prompting strategies despite receiving no explicit
supervision over PII categories during training.

\begin{promptexample}{PUPA}{pcoPUPA}
\textbf{Input Task.}~\textit{``Summarise the patient's discharge notes and
list all identifiable personal data.''}

\vspace{6pt}
\textbf{Encoder Selection} ($K{=}16$, $S{=}4$) \\

\vspace{2pt}
\cbrow{$z_1$}{PII Detection and Redaction}{3}{identifiable entity isolation}
\cbrow{$z_2$}{Contextual Sensitivity}{10}{privacy-aware content handling}
\cbrow{$z_3$}{Minimal Disclosure Principle}{17}{task-bounded information release}
\cbrow{$z_4$}{Audit-Trail Annotation}{21}{downstream compliance logging}

\vspace{4pt}
\textbf{\footnotesize GENERATOR-COMPOSED PROMPT}
\begin{genprompt}
\itshape\small
Prioritise privacy preservation during processing. Detect and redact all
personally identifiable information before summarisation, disclose only
information necessary for the task, and annotate sensitive content
categories to support downstream auditing.
\end{genprompt}
\end{promptexample}


\subsection{Qualitative Evolution of Reasoning Instincts}
\label{app:instinct_evolution}

Table~\ref{tab:evolution} shows PCO's textual-gradient optimization
transforming vague seed heuristics (Epoch~1) into structured,
domain-specific instincts (Epoch~50) --- e.g., ``break complex tasks into
steps'' becomes ``decompose complex queries into semantic nodes.''
Table~\ref{tab:instruction_evolution} traces this process at finer grain,
following a single instruction (Index~\#12) on IFBench: as gradient descent
enforces stricter constraint adherence, the selection-weighted success rate
($sr$) rises monotonically, confirming PCO synthesises novel,
high-performing strategies rather than merely selecting among existing
ones.

\begin{table}[h]
\centering
\scriptsize
\renewcommand{\arraystretch}{1.15}
\setlength{\tabcolsep}{3pt}

\begin{tabular}{p{0.06\columnwidth}
                p{0.40\columnwidth}
                p{0.46\columnwidth}}
\toprule

\textbf{ID} &
\textbf{Epoch~1)} &
\textbf{Epoch~50} \\

\midrule

\#13 &
\textit{Reflective Verification:}
Anticipate counterarguments and perform self-critical
analysis to ensure robust responses. &
\textbf{Multi-step Validation Pipeline:}
Synchronise contextual input nodes with logical-path
execution to verify internal consistency and factual
alignment through iterative cross-source synthesis. \\

\#8 &
\textit{Task Decomposition:}
Break complex tasks into sequential logical steps
to improve clarity. &
\textbf{Interconnected Multi-Hop Synthesis:}
Decompose complex queries into semantic nodes and
resolve latent dependencies through intermediate
inferences for high-fidelity logical synthesis. \\

\#5 &
\textit{Standard Review:}
Conduct a final review to ensure outputs satisfy
guidelines and task requirements. &
\textbf{Expert-Opinion Synthesis Framework:}
Execute a multi-stage review protocol integrating
academic and empirical evidence to ensure precision
and trustworthiness. \\

\bottomrule
\end{tabular}

\caption{Qualitative evolution of representative instincts
from initialization Epoch~1 to Epoch~50.}

\label{tab:evolution}
\end{table} 

\begin{table}[h]
\centering
\small
\setlength{\tabcolsep}{4pt}
\renewcommand{\arraystretch}{1.08}
\begin{tabular}{@{} c >{\raggedright\arraybackslash}p{0.68\linewidth} c @{}}
\toprule
\textbf{Epoch} & \textbf{Instruction Evolution (Index \#12)} & \textbf{Reward ($sr$)} \\
\midrule
15 & structure the output using explicit paragraph boundaries. & 0.25 \\
\addlinespace[2pt]
30 & align paragraph structure strictly with numeric and formatting constraints. & 0.41 \\
\addlinespace[2pt]
50 & enforce rigid constraint adherence while maintaining logical narrative flow. & \textbf{0.65} \\
\bottomrule
\end{tabular}
\caption{Iterative refinement trajectory of a single codebook instruction (Index~\#12) via textual gradient descent on IFBench (LLaMA-3.1-8B).}
\label{tab:instruction_evolution}
\end{table}

\section{Limitations and Societal Impact}
\label{app:limitations_detail}

Because PCO optimizes a reusable vocabulary of natural-language instincts, an adversarially optimized codebook could systematically propagate bias, manipulative phrasing, or safety-evading instructions across tasks if transferred without inspection. We therefore recommend three mandatory safeguards prior to deployment: (i) \emph{pre-transfer codebook auditing}, whereby every entry in $\mathcal{C}^\star$ undergoes automated safety and bias scanning before being deployed or transferred to a new task; (ii) \emph{instinct version control}, logging codebook entries as immutable versioned artifacts to enable immediate rollback if downstream misbehavior is detected; and (iii) a \emph{safety-regularized critic rubric}, in which the critic's evaluation criteria $\psi$ explicitly penalize harmful or toxic instincts during textual gradient optimization. 

\end{document}